\documentclass{article}
\usepackage{preprint,times}

\usepackage{amsmath,amsfonts,bm}

\def\eqref#1{equation~\ref{#1}}

\def\1{\bm{1}}

\DeclareMathAlphabet{\mathsfit}{\encodingdefault}{\sfdefault}{m}{sl}
\SetMathAlphabet{\mathsfit}{bold}{\encodingdefault}{\sfdefault}{bx}{n}

\newcommand{\softmax}{\mathrm{softmax}}

\DeclareMathOperator*{\argmax}{arg\,max}

\usepackage{amssymb}
\usepackage{booktabs}
\usepackage{multirow}
\usepackage{microtype}
\usepackage{xcolor}
\usepackage{algorithm}
\usepackage{algpseudocode}
\usepackage{array}
\usepackage{graphicx}
\usepackage{float}
\usepackage{booktabs,tabularx,ragged2e}
\usepackage{placeins}
\usepackage{hyperref}
\usepackage{url}
\usepackage{wrapfig}
\newcolumntype{L}[1]{>{\raggedright\arraybackslash}p{#1}}

\newcommand{\gpmap}{\textsc{GP-MAP}}
\newcommand{\gpfb}{\textsc{GP-FB}}
\newcommand{\pfn}{\textsc{PFNs4BO}}
\newcommand{\pfnhebo}{\textsc{PFNs4BO-HEBO}}
\newcommand{\pfnbnn}{\textsc{PFNs4BO-BNN}}

\newcommand{\fibo}{\textsc{FIBO}}
\newcommand{\ours}{\textsc{IQS-BO}}
\newcommand{\oursgp}{\textsc{IQS-BO-GP}}
\newcommand{\ourspfn}{\textsc{IQS-BO-PFN}}
\newcommand{\ctrl}{\textsc{EI-CAND}}

\usepackage{amsthm,amsfonts,amsmath}

\newtheorem{proposition}{Proposition}

\theoremstyle{plain}

\title{IQS-BO: In-Context Query Selection for Bayesian Optimisation}

\author{Luca Geminiani$^{1}$ \quad Nadja Klein$^{1}$ \\
\normalfont $^{1}$Scientific Computing Center (SCC), Karlsruhe Institute of Technology (KIT)}

\begin{document}
\maketitle

\begin{abstract}
Bayesian Optimisation (BO) is a powerful framework for the optimisation of expensive black-box functions, but typically requires refitting a surrogate and maximising an acquisition function at every evaluation step. In-context approaches based on Prior-data Fitted Networks (PFNs) amortise part of this cost by pre-training transformers on functions drawn from synthetic priors. \pfn{} amortises the surrogate but still relies on a numerically maximised acquisition function, while \fibo{} performs BO fully in-context by sampling optimiser locations from a learned density, which fixes the decision rule and admits no surrogate. Learned acquisition functions score a finite candidate set with a trained network, but, lacking a label for the query, learn the score by reinforcement learning on previously solved tasks. We propose \ours, a PFN that learns the query decision by supervised learning on synthetic priors. In a single forward pass, \ours{} predicts the probability that each candidate maximises the objective over the set, and we show that the minimiser of its objective is the posterior probability of this event. The model can be pre-trained without a surrogate for fully in-context BO, or take the predictions of a fixed probabilistic surrogate as additional input, amortising only the decision step. Our method proposes queries at a fraction of the cost of acquisition-based methods, while either matching or outperforming standard BO with Gaussian processes (GPs) and available in-context methods on synthetic and real-world benchmarks. Finally, we propose a mixture prior for pre-training PFNs which combines samples from GPs with functions exhibiting warped inputs, isolated narrow optima, or plateaus that are poorly modeled by stationary kernels common in GP surrogates. We show that pre-training on this prior can lead to  improved optimisation performance. 

\end{abstract}

\section{Introduction}
\label{sec:intro}
Bayesian optimisation (BO) \citep{jones1998efficient,shahriari2016taking,garnett2023bayesopt} aims to maximise an expensive black-box function, and has proven particularly beneficial when objective evaluations are costly and only a limited number of evaluations is available. Given an initial dataset, BO typically alternates between fitting a probabilistic surrogate and selecting the next evaluation point by maximising an acquisition function. Both steps are repeated for every new observation: the surrogate must adapt as the dataset grows, while the acquisition function is a non-convex objective whose optimisation often requires dedicated sampling, restart, and numerical optimisation procedures \citep{wilson2018maximizing}.

In-context methods can reduce per-round costs by pre-training transformer architectures on functions drawn from priors from which sampling is efficient \citep{muller2022transformers}. Existing methods differ in which part of the BO process they amortize (Figure~\ref{fig:bo-amortisation}).
\begin{figure}[ht]
\centering
\expandafter\ifx\csname iflatexml\endcsname\relax
    \begin{minipage}[t]{0.65\textwidth}
        \vspace{0pt}
        \centering
        \includegraphics[trim=0cm 0cm 0cm 0cm, clip, width=\linewidth]{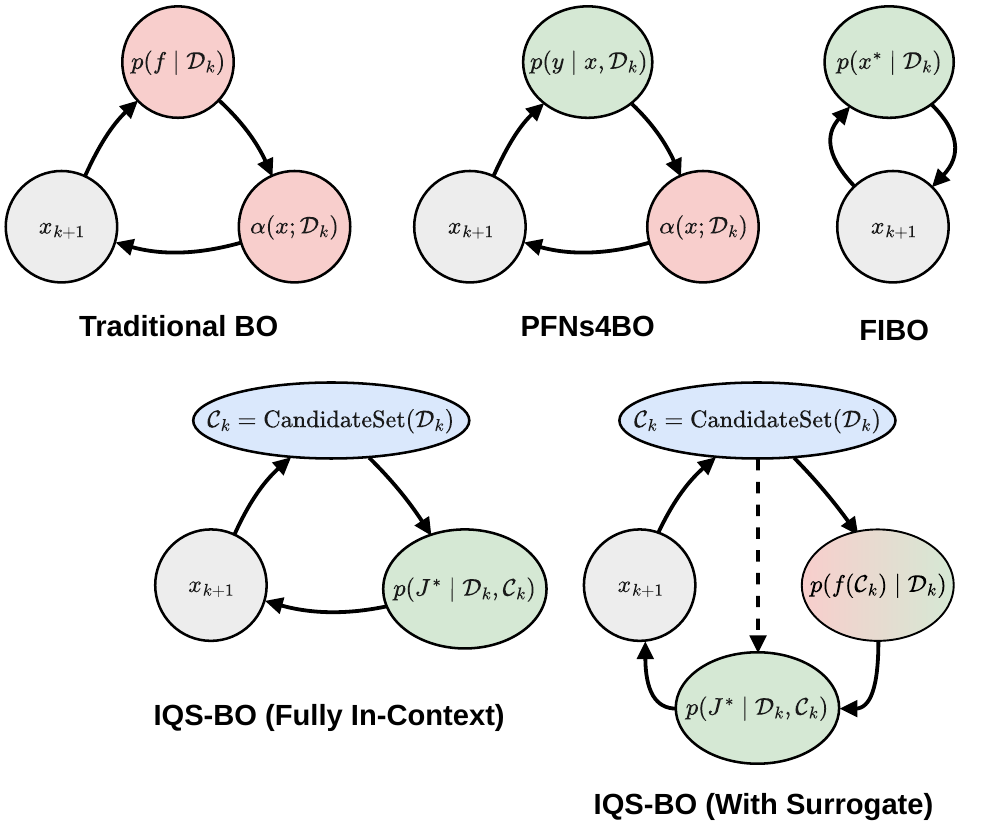}
    \end{minipage}\hfill
    \begin{minipage}[t]{0.34\textwidth}
        \vspace{0pt}
        \caption{One optimisation round of traditional BO, PFNs4BO \citep{muller2023pfns4bo}, FIBO \citep{fibo2025} and IQS-BO with and without a surrogate. $\mathcal{D}_k$: observations after $k$ evaluations, $x_{k+1}$: next query, $\mathcal{C}_k$: candidates drawn from $\mathcal{D}_k$, $\alpha$: acquisition function, $J^*$: index of the best candidate. Red: fitted or optimised every round. Green: one forward pass of a pre-trained network. Red-to-green: fixed surrogate of either kind. Blue: sampling without learning. Grey: quantities passed between steps.}
        \label{fig:bo-amortisation}
    \end{minipage}
    \vspace{-20pt}
\else
    \includegraphics[width=\linewidth]{figs/bo_amort_comparison.drawio.pdf}
    \caption{One optimisation round of traditional BO, PFNs4BO \citep{muller2023pfns4bo}, FIBO \citep{fibo2025} and IQS-BO with and without a surrogate. $\mathcal{D}_k$: observations after $k$ evaluations, $x_{k+1}$: next query, $\mathcal{C}_k$: candidates drawn from $\mathcal{D}_k$, $\alpha$: acquisition function, $J^*$: index of the best candidate. Red: fitted or optimised every round. Green: one forward pass of a pre-trained network. Red-to-green: fixed surrogate of either kind. Blue: sampling without learning. Grey: quantities passed between steps.}
    \label{fig:bo-amortisation}
\fi
\end{figure}

\pfn\, \citep{muller2023pfns4bo} amortizes the surrogate with a Prior-data Fitted Network (PFN) which returns the posterior predictive of each input in a single forward pass. Surrogate fitting is no longer required, since the model adapts to new observations in context, but an acquisition function must still be specified and numerically maximised at every round. Moreover, the acquisition function requires a closed form on the PFN's output.

\fibo\, \citep{fibo2025} removes the surrogate fit as well as the choice and maximisation of the acquisition function.
A pre-trained PFN encoder embeds the observations, and a normalizing flow conditioned on that embedding models the
distribution of the optimum's location which is sampled directly to obtain proposals. The learned object is a
density over the search space, and the authors show that sampling from it amounts to Thompson sampling under the pre-training prior. The method is especially suitable for high-throughput optimisation, since a batch of any size can be drawn almost for free as a set of i.i.d. draws, but implies a fixed query selection policy. %

Relevant to our work, learned acquisition functions replace the analytic rule by a network that scores a finite candidate set. NAP \citep{maraval2023nap} uses a transformer neural process that reads the observations and candidates and returns a score for each, learning a surrogate and the acquisition rule jointly. POLAR \citep{huang2026polar} learns a similar score on top of the representations of a pretrained predictive model. Since the best point to query is unknown on real data, both learn the score from rewards, by reinforcement learning or policy gradient on rollouts over solved tasks.

To the best of our knowledge, no method learns the query decision by supervised learning on synthetic priors. Under such priors the training label is free, since each pre-training function can be evaluated at every candidate, so the best one is known and query selection becomes a classification problem. This is the basis for In-context Query Selection for Bayesian Optimisation (\ours).

Concretely, \ours{} trains a transformer that reads the observations and a candidate set and predicts, for each candidate, the probability that it has the largest function value in the set. The network is trained with a cross-entropy loss, whose minimiser is the posterior probability that each candidate is the best. At deployment, \ours{} queries the candidate with the highest predicted probability, which approximates the Bayes-optimal choice among the candidates under the pre-training prior.

The proposed framework admits two implementations. First, the network can be trained directly on the observations, without a surrogate. This yields a model for fully in-context BO that, like FIBO, requires neither surrogate fitting nor acquisition function optimisation at deployment. Second, the network can take as additional input the predictions of a fixed probabilistic surrogate. We show this with a released \pfn{} model and with a Gaussian process whose hyperparameters are fitted to each synthetic task during pre-training and refitted at every iteration during inference.

Empirically, \ours{}, pre-trained for a fixed input dimension and batch size, (i)~outperforms \pfn{} pre-trained on the same prior and deployed via acquisition-function maximisation with L-BFGS-B, with gains increasing as both context size and batch size grow; (ii)~matches or outperforms GP-based BO, \fibo{} and NAP across real and synthetic benchmarks in three and four dimensions; and (iii)~excels in batch optimisation, reaching a better performance than \fibo{} at every batch size while maintaining comparable computational cost and proposing evaluations one to three orders of magnitude faster than \pfn{} and GP-based baselines.

Finally, we propose and evaluate a pre-training prior that augments GP samples with function families that are poorly captured by fixed-kernel GPs. We show that PFN-based models pre-trained on this prior achieve better optimisation performance than on priors that only include GP samples. The gains are largest when the initial context is small and become less pronounced as the context grows. 

\subsection{Summary of Main Contributions}
Our main contributions are:
\begin{itemize}
\item We formulate query selection in BO as supervised classification over a candidate set, where the class to predict is the index of the best candidate. Because this label is exact and cheap to compute for functions sampled from a prior, the selection rule can be learned with a standard cross-entropy loss.
\item We show that the cross-entropy objective is minimised by the posterior probability that a candidate is the best element of the candidate set.  Consequently, the learned policy approximates the Bayes-optimal decision rule induced by the pre-training prior. 
\item We introduce \ours{}, a transformer that selects the next query in-context by scoring all candidates in a single forward pass. The model can be pre-trained without a surrogate, enabling fully in-context BO, or combined with a fixed probabilistic surrogate, such as a pre-trained PFN or a GP, to amortise the query-selection step only. Batches are constructed sequentially, with each selection conditioned on the preceding ones.
\item We study the impact of the pre-training prior and propose a new prior that combines GP samples with warped, localised and piecewise-flat functions. We show that pre-training on this prior can improve the optimisation performance of PFN-based methods for BO. 
\end{itemize}

\section{Problem Statement}
\label{sec:problem}

We consider the maximisation of an expensive black-box function $f:\mathcal X\to\mathbb R$ over $\mathcal X=[0,1]^d$. At each optimisation round, we have observed $\mathcal D=\{(x_i,y_i)\}_{i=1}^{n}$, with noisy evaluations $y_i=f(x_i)+\varepsilon_i$, and must choose the next point at which to evaluate $f$. Standard BO makes this choice by fitting a probabilistic surrogate $p(f\mid\mathcal D)$ and maximising an acquisition function $\alpha$ over $\mathcal X$.

Our goal is to replace the maximisation of the acquisition function or, more generally, the selection of the next query, by a decision rule that is learned in-context. At each optimisation round, a stochastic sampler constructs a finite candidate set $\mathcal C=\{c_1,\ldots,c_M\}\subset\mathcal X$ of $M$ points from $\mathcal D$ (Appendix~\ref{app:candset}). Instead of maximising $\alpha$ over $\mathcal X$, we learn to select one candidate from $\mathcal C$ directly. Let
\begin{equation}
J^* = \argmax_{k=1,\ldots,M} f(c_k)
\label{eq:best-candidate}
\end{equation}
denote the index of the best candidate. Given the context $\mathcal D$ and the candidate set $\mathcal C$, we select the candidate with the largest posterior probability of achieving the highest function value in the set, i.e., 
\begin{equation} 
j^\star = \argmax_{j=1,\ldots,M} \mathbb P(J^*=j \mid \mathcal D,\mathcal C).
\label{eq:targetmax} 
\end{equation}

\section{Methodology}
\label{sec:method}
We now describe how the network $p_\theta$, with parameters $\theta$, is trained to approximate the posterior
\begin{equation}
\mathbb P(J^*=j \mid \mathcal D,\mathcal C),\qquad j=1,\ldots,M,
\label{eq:target}
\end{equation}
and how it selects the next query at inference. The network reads the observations and the candidates and returns (\ref{eq:target}) for every candidate in a single forward pass. It is pre-trained on functions sampled from a synthetic prior, so that it selects queries in-context without any parameter update, and it can take the predictions of a fixed surrogate $\phi$ as additional input (Section~\ref{sec:pretraining}). We first state the training objective (Section~\ref{sec:objective}) and its Bayesian interpretation (Section~\ref{sec:interpretation}), then describe the architecture (Section~\ref{sec:architecture}), pre-training (Section~\ref{sec:pretraining}) and inference (Section~\ref{sec:inference}). We denote our method by \ours.

\subsection{Training Objective}
\label{sec:objective}
The network $p_\theta$ is pre-trained on synthetic tasks $(\mathcal D,\mathcal C,J^*)$, obtained by drawing a function $f$ from a pre-training prior $p(f)$, a context $\mathcal D$ of noisy evaluations of $f$, a candidate set $\mathcal C$ from $\mathcal D$, and the label $J^*$ of (\ref{eq:best-candidate}). Since $f$ is known during pre-training, it can be evaluated without noise at every candidate, so $J^*$ is computed exactly, at the cost of $M$ function evaluations and one $\argmax$. The parameters $\theta$ are learned by minimising the expected cross-entropy
\begin{equation}
\mathcal L(\theta)=\mathbb E\bigl[-\log p_\theta(J^*\mid\mathcal D,\mathcal C)\bigr],
\label{eq:cross-entropy}
\end{equation}
where the expectation is over tasks generated as above. In practice, (\ref{eq:cross-entropy}) is estimated by its average over mini-batches of sampled tasks. Its minimiser is the posterior (\ref{eq:target}), so pre-training reduces to a multiclass classification problem whose optimal solution is the posterior probability that each candidate is the best in the set, as shown in Proposition~\ref{prop:posterior-best}.

\subsection{Posterior-Best-Candidate Interpretation}
\label{sec:interpretation}

The cross-entropy objective (\ref{eq:cross-entropy}) admits a simple Bayesian interpretation. Write $J^*=J^*(f,\mathcal C)$ for the label (\ref{eq:best-candidate}), viewed as a function of the sampled function $f$ and the candidate set $\mathcal C$. Training tasks are generated as $f\sim p(f)$, $\mathcal D\sim p(\mathcal D\mid f)$ and $\mathcal C\sim p(\mathcal C\mid\mathcal D)$, so (\ref{eq:cross-entropy}) is the population loss under the joint law of $(f,\mathcal D,\mathcal C)$. Since the candidate sampler uses only $\mathcal D$ and independent randomness, $\mathcal C\perp f\mid\mathcal D$. The network output $p_\theta(\cdot\mid\mathcal D,\mathcal C)$ is a probability mass function on $\{1,\ldots,M\}$.

\begin{proposition}\label{prop:posterior-best}
If the model is well specified, that is, $p_\theta(\cdot\mid\mathcal D,\mathcal C)=\mathbb P(J^*=\cdot\mid\mathcal D,\mathcal C)$ almost surely for some $\theta$, then every minimiser $\theta^*$ of $\mathcal L$ satisfies, almost surely and for all $j$,
\[
p_{\theta^*}(j\mid\mathcal D,\mathcal C)
=\mathbb P(J^*=j\mid\mathcal D,\mathcal C)
=\mathbb P_{f\sim p(f\mid\mathcal D)}\big(J^*(f,\mathcal C)=j\big).
\]
\end{proposition}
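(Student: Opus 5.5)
The argument is the standard proper-scoring-rule decomposition of the cross-entropy, followed by a conditional-independence step for the second equality. Write $q(j\mid\mathcal D,\mathcal C)=\mathbb P(J^*=j\mid\mathcal D,\mathcal C)$ for the true conditional law of the label. By the tower property, I would condition on $(\mathcal D,\mathcal C)$ and rewrite
\[
\mathcal L(\theta)=\mathbb E_{\mathcal D,\mathcal C}\Bigl[-\sum_{j=1}^M q(j\mid\mathcal D,\mathcal C)\log p_\theta(j\mid\mathcal D,\mathcal C)\Bigr].
\]
Adding and subtracting the entropy term then gives
\[
\mathcal L(\theta)=\mathbb E_{\mathcal D,\mathcal C}\bigl[H(q(\cdot\mid\mathcal D,\mathcal C))\bigr]+\mathbb E_{\mathcal D,\mathcal C}\bigl[\mathrm{KL}\bigl(q(\cdot\mid\mathcal D,\mathcal C)\,\|\,p_\theta(\cdot\mid\mathcal D,\mathcal C)\bigr)\bigr].
\]
The first term does not depend on $\theta$ and is finite, at most $\log M$. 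By Gibbs' inequality the second term is nonnegative.

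Next I use well-specification. There is some $\theta_0$ with $p_{\theta_0}=q$ almost surely, so the KL term vanishes at $\theta_0$. Hence $\min_\theta\mathcal L=\mathbb E[H(q)]$, and this minimum is attained. Any minimiser $\theta^*$ must therefore have $\mathbb E[\mathrm{KL}(q\,\|\,p_{\theta^*})]=0$. A nonnegative random variable with zero expectation is zero almost surely. Equality in Gibbs' inequality holds only when the two mass functions coincide, so $p_{\theta^*}(j\mid\mathcal D,\mathcal C)=q(j\mid\mathcal D,\mathcal C)$ for all $j$, almost surely. Because the index set $\{1,\ldots,M\}$ is finite, "for all $j$" comes for free once we intersect finitely many null sets.

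For the second equality I use $\mathcal C\perp f\mid\mathcal D$, established just before the statement. Since $J^*$ is a measurable function of $(f,\mathcal C)$, we have
\[
\mathbb P(J^*=j\mid\mathcal D,\mathcal C)=\int \1\{J^*(f,\mathcal C)=j\}\,p(f\mid\mathcal D,\mathcal C)\,df.
\]
Conditional independence gives $p(f\mid\mathcal D,\mathcal C)=p(f\mid\mathcal D)$. The integral is therefore $\mathbb P_{f\sim p(f\mid\mathcal D)}(J^*(f,\mathcal C)=j)$, with $\mathcal C$ held fixed.

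I expect the main obstacle to be measure-theoretic care rather than any computation. The function-valued $f$ needs a regular conditional distribution, which I would obtain by taking $f$ in a standard Borel space such as $C([0,1]^d)$. The function $J^*$ must be measurable, with ties broken by a fixed rule such as the smallest index so that the $\argmax$ is well defined. One must also handle points where $p_\theta(j)=0$ but $q(j)>0$: there the loss is infinite, which is consistent with the inequality and harmless to the argument. I would state these as standing assumptions and not prove them in detail.
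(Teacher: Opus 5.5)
Your proposal is correct and follows essentially the same route as the paper. Both condition on $(\mathcal D,\mathcal C)$ and split the loss into the $\theta$-independent entropy term (bounded by $\log M$) plus the expected KL divergence. Both then use well-specification to show that the value $\mathbb E[H]$ is attained, so every minimiser has zero KL almost surely, and obtain the second equality from $\mathcal C\perp f\mid\mathcal D$. Your measure-theoretic remarks (regular conditional distributions, a tie-breaking rule for the $\argmax$, and zero model probabilities, which the softmax output rules out anyway) are extra care the paper leaves implicit, not a change of argument.
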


\begin{proof}
Let $\pi_j=\mathbb P(J^*=j\mid\mathcal D,\mathcal C)$ and $H(\pi)=-\sum_j\pi_j\log\pi_j\le\log M$. Conditioning on $(\mathcal D,\mathcal C)$, with all outer expectations below taken over $(\mathcal D,\mathcal C)$,
\[
\mathcal L(\theta)
=\mathbb E\Big[-\sum_{j=1}^M\pi_j\log p_\theta(j\mid\mathcal D,\mathcal C)\Big]
=\mathbb E\big[H(\pi)\big]+\mathbb E\Big[\mathrm{KL}\big(\pi\,\big\|\,p_\theta(\cdot\mid\mathcal D,\mathcal C)\big)\Big].
\]
The first term does not depend on $\theta$, and the second is non-negative and vanishes if and only if $p_\theta(\cdot\mid\mathcal D,\mathcal C)=\pi$ almost surely. Well-specification makes the value $\mathbb E[H(\pi)]$ attainable, so the minimisers are exactly such $\theta$. Finally, $J^*$ is a function of $(f,\mathcal C)$, and $p(f\mid\mathcal D,\mathcal C)=p(f\mid\mathcal D)$ because $\mathcal C\perp f\mid\mathcal D$, which gives the last equality.
\end{proof}

Consequently, a network trained by minimising (\ref{eq:cross-entropy}) approximates the target (\ref{eq:target}), the posterior probability that each candidate is the best in the set. Under a 0--1 loss on the identity of the best candidate, selecting $\argmax_j\mathbb P(J^*=j\mid\mathcal D,\mathcal C)$ is the Bayes-optimal decision. The network, which selects $\argmax_j p_\theta(j\mid\mathcal D,\mathcal C)$, therefore approximates the Bayes-optimal myopic decision rule on the candidate set.

\subsection{Architecture}
\label{sec:architecture}
The network $p_\theta$ is a transformer that reads $\mathcal D$ and $\mathcal C$ and returns a probability for every candidate in a single forward pass. Each context point $(x_i,y_i)$ is represented by the token $s=i$ and each candidate $c_j$ by the token $s=n+j$, giving $n+M$ tokens, each with a feature vector $t_s$ that includes the coordinates of the point and its relation to the data, and differs between context and candidate tokens (Appendix~\ref{app:model}). If a fixed surrogate $\phi$ is used, its outputs at each token, collected in a vector $m_s$, are added to the embedding
\begin{equation}
h^{(0)}_s=W_{\mathrm{feat}}\,t_s+W_{\phi}\,m_s,
\label{eq:embedding}
\end{equation}
where $W_{\mathrm{feat}}$ and $W_{\phi}$ are learned and the second term is absent without a surrogate.

The tokens are processed by a transformer encoder with $L$ layers and no positional encoding. The encoder maps the embeddings $h^{(0)}_s$ to final states $h^{(L)}_s$, and a scoring head $g_\theta$ maps the final state of each candidate token to a score $z_j=g_\theta\bigl(h^{(L)}_{n+j}\bigr)$. A softmax over the candidates gives
\begin{equation}
p_\theta(j\mid\mathcal D,\mathcal C)=\frac{\exp(z_j)}{\sum_{k=1}^{M}\exp(z_k)},\qquad j=1,\ldots,M.
\label{eq:softmax}
\end{equation}
Context tokens attend only to each other and candidate tokens only to context tokens, so each score depends only on $\mathcal D$ and its own candidate, and $M$ can differ between pre-training and inference (Appendix~\ref{app:architecture}).

\subsection{Pre-Training}
\label{sec:pretraining}
Pre-training only requires sampling functions from a prior and evaluating them at arbitrary points. We use two priors. The first is the HEBO prior of PFNs4BO \citep{muller2023pfns4bo}, sampled with the hyperparameters of its released model. The second is a mixture prior that combines GP samples with three types of functions that stationary GP surrogates model poorly, namely input-warped GP samples, sums of local Gaussian bumps and piecewise-flat plateau functions (Appendix \ref{app:prior}).

Without a surrogate, the network reads only the observed data and is trained as in Algorithm~\ref{alg:pretrain}. It learns from $\mathcal D$ both what the data imply about $f$ and which candidate to select, and the result is a fully in-context model that requires neither surrogate fitting nor an acquisition function at inference.

With a surrogate $\phi$, Algorithm~\ref{alg:pretrain} has one additional step. Before the forward pass, $\phi$ is evaluated on the context of the task, and its outputs $m_s$ enter the embedding (\ref{eq:embedding}). The loss, the data and the rest of the network are unchanged. In our experiments we use two surrogates:
\begin{itemize}
\item \ourspfn{} uses the released \pfnhebo{} model \citep{muller2023pfns4bo}, with parameters kept frozen during training. The network reads a learned combination of the PFN's hidden states at every token and five summaries of its predictive distribution at every candidate. Since both the surrogate and \ours\, are in-context, this model performs BO fully in-context.
\item \oursgp{} uses a GP with a Mat\'ern-5/2 kernel with automatic relevance determination (ARD), whose hyperparameters are fitted by maximum a posteriori (MAP) estimation to the context of each pre-training task. The network reads the posterior mean, the posterior standard deviation, the posterior correlation with the best context point and the fitted hyperparameters. At inference, the GP is refitted at every evaluation, as in standard BO.
\end{itemize}
All three variants can be pre-trained to select batches of $q$ queries (Appendix~\ref{app:batch}). The $q$ points are selected one at a time, and each earlier pick is passed to the network as a pending token, so that it learns to take previous decisions into account. Selecting the best $q$ points from a candidate set is a different decision problem for every $q$. For optimal performance, each model was pre-trained for a specific batch size. Depending on the variant, pre-training takes between 2.7 and 6.1 hours on a single NVIDIA H100 GPU (Appendix~\ref{app:training}).

\begin{algorithm}[t]
\caption{Pre-training \ours{} (no surrogate, single query)}
\label{alg:pretrain}
\begin{algorithmic}[1]
\Require prior $p(f)$, network parameters $\theta$, candidate-set size $M$
\For{each optimiser step}
  \State Sample $f\sim p(f)$ and a context $\mathcal D=\{(x_i,y_i)\}_{i=1}^{n}$ from $f$ \Comment{Section~\ref{sec:pretraining}}
  \State Draw $\mathcal C=\textsc{CandidateSet}(\mathcal D,M)$ \Comment{Appendix~\ref{app:candset}}
  \State Compute $J^*$ by (\ref{eq:best-candidate}) on noise-free $f$
  \State Compute $p_\theta(\cdot\mid\mathcal D,\mathcal C)$ and take a gradient step on $-\log p_\theta(J^*\mid\mathcal D,\mathcal C)$ \Comment{Section~\ref{sec:architecture}}
\EndFor
\end{algorithmic}
\end{algorithm}

\begin{algorithm}[t]
\caption{One optimisation round of \ours{} (single query)}
\label{alg:bo_round}
\begin{algorithmic}[1]
\Require context $\mathcal D$, network $\theta$, surrogate $\phi$ or $\varnothing$, candidate-set size $M$
\State Draw $\mathcal C=\textsc{CandidateSet}(\mathcal D,M)$
\State If $\phi\neq\varnothing$, compute the surrogate outputs $m=\phi(\mathcal D,\mathcal C)$
\State Compute $p_\theta(\cdot\mid\mathcal D,\mathcal C)$ in one forward pass, with $m$ as input if available
\State Select $x_{\mathrm{next}}=c_{\hat\jmath}$ with $\hat\jmath=\argmax_{j\le M}p_\theta(j\mid\mathcal D,\mathcal C)$
\State Evaluate $f$ at $x_{\mathrm{next}}$, observe $y_{\mathrm{next}}$ and set $\mathcal D\gets\mathcal D\cup\{(x_{\mathrm{next}},y_{\mathrm{next}})\}$
\end{algorithmic}
\end{algorithm}

\subsection{Inference}
\label{sec:inference}
After pre-training, the network is deployed without any parameter update (Algorithm~\ref{alg:bo_round}). At every round a new candidate set with $M$ elements is drawn, one forward pass gives $p_\theta(\cdot\mid\mathcal D,\mathcal C)$, and the candidate with the highest probability is evaluated. A round of \ours{} costs one forward pass, \ourspfn{} adds one forward pass of the frozen PFN, and \oursgp{} adds one GP fit. Batches are selected sequentially with pending tokens, as described in Section~\ref{sec:pretraining} and Appendix ~\ref{app:batch}.

\section{Empirical Evaluation}
\label{sec:results}

This section evaluates \ours{} in four steps. Section~\ref{sec:results-main} compares its three variants, without a surrogate, with a PFNs4BO surrogate (\ourspfn{}) and with a GP surrogate (\oursgp{}), against GP-based and in-context baselines. Section~\ref{sec:results-controlled} isolates the effects of the learned query selection and of the mixture prior. Section~\ref{sec:results-batch} evaluates batch optimisation, and Section~\ref{sec:ablation} ablates the design of the candidate set and the sequential batch rule. Implementation details, datasets, baselines, metrics and additional results are given in Appendices~\ref{app:model}--\ref{app:results}.

\subsection{Experimental Setup}
\label{sec:setup}
\label{sec:benchmarks}
We use ten 3D problems, eight synthetic and two real-world, and five 4D problems, four synthetic and one real-world, two of which were chosen to be suited for GP surrogates (Appendix~\ref{app:benchmarks}). As baselines, we use GP-based BO with MAP (\gpmap{}) and full Bayes (\gpfb{}), as well as two released PFNs4BO models, \fibo{}, NAP \citep{maraval2023nap} and Sobol search (Appendix~\ref{app:baselines}). 

All runs start from the same Sobol design of $n_{\text{init}}\in\{10,20,40\}$ points and run for $T=30$ rounds with $q=2$, with 90 seeds per problem. We report the regret $\tilde r(t)=f^*-f(x_t^{+})$ after $t$ rounds, where $x_t^{+}$ is the incumbent, clipped at zero and normalised by the mean regret of the initial design on each problem. Each run is summarised by the average regret $R^{\text{AUC}}=\frac{1}{T}\sum_{t=1}^{T}\tilde r(t)$, which rewards reaching low regret early, and by $R^{\text{final}}=\tilde r(T)$, which measures the regret at the final round. To compare methods across problems, we rank them on each problem and seed by these quantities, with $1$ being best, and report the average ranks. For pairwise comparisons, we report the win rate, the share of paired runs, with the same problem, seed and initial design, in which a method attains the lower regret. For batch optimisation, we report the GAP \citep{huang2006global}, $(y_t-y_0)/(y^*-y_0)$, where $y_t$ is the best value after $t$ evaluations, $y_0$ the best value in the initial design and $y^*$ the optimum.

\subsection{Main Results}
\label{sec:results-main}
Figure~\ref{fig:main} shows the average regret and the average rank per round with a starting context of $n_{\text{init}}=40$ on the 3D problems, and Table~\ref{tab:rank} reports the average ranks at every context on the 3D and 4D problems. In both, methods pre-trained on the mixture prior are compared separately from methods pre-trained on the HEBO or GP-based priors. \gpmap{}, \gpfb{} and Sobol require no pre-training and are included in all comparisons. Results at $n_{\text{init}}\in\{10,20\}$ and per-problem results are given in Appendix~\ref{app:full3d}.

\begin{figure}[t!]
\centering
\setlength{\abovecaptionskip}{2pt}
\includegraphics[width=\textwidth]{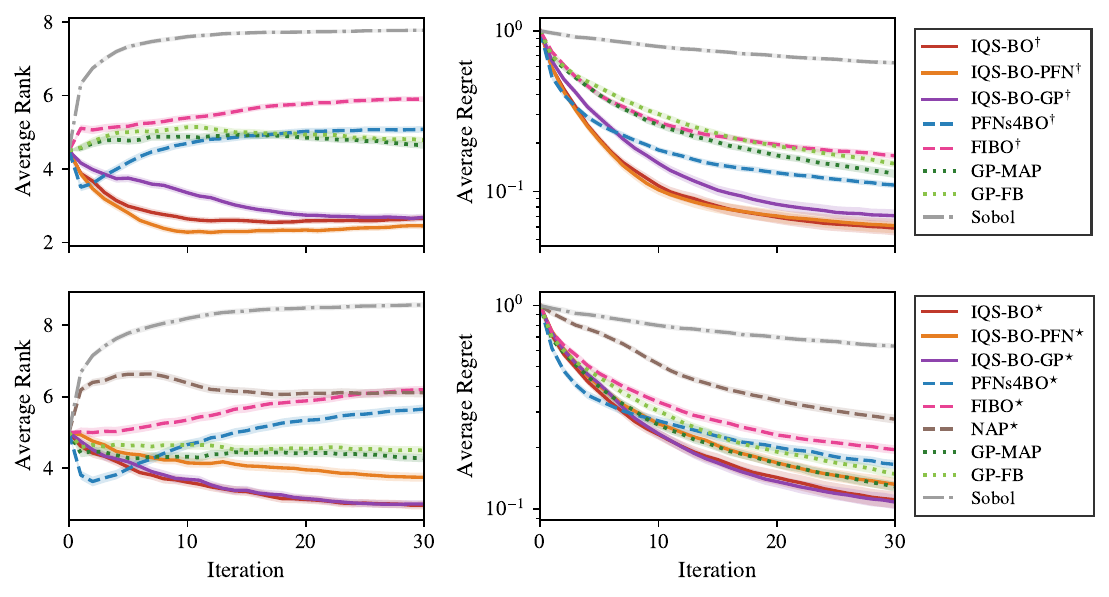}
\caption{Average rank (left) and average regret (right) per iteration, ten 3D problems, $n_{\text{init}}=40$, 90 seeds, $q=2$, with $95\%$ confidence intervals. Top, methods pre-trained on the mixture prior ($^{\dagger}$). Bottom, methods pre-trained on the HEBO prior or on GP draws ($^{\star}$). Ranks are computed within each row, and regret is normalised by the mean initial regret of each problem.}
\label{fig:main}
\vspace{-10pt}
\end{figure}

\begin{table}[H]
\centering\footnotesize
\setlength{\abovecaptionskip}{2pt}
\setlength{\belowcaptionskip}{5pt}
\renewcommand{\arraystretch}{0.85}
\caption{Mean rank on $R^{\text{AUC}}$ (left) and $R^{\text{final}}$ (right), over 90 seeds and the ten 3D or five 4D problems, $\pm$ $95\%$ bootstrap CI half-width. $^{\dagger}$ mixture prior, $^{\star}$ HEBO or GP prior. Ranks are computed within each block, and bold marks the best rank per column.}
\label{tab:rank}
\setlength{\tabcolsep}{1pt}
\begin{tabular*}{\linewidth}{@{}l@{\extracolsep{\fill}}cccccc@{\hspace{8pt}}cccccc@{}}
\toprule
 & \multicolumn{6}{c}{$R^{\text{AUC}}$} & \multicolumn{6}{c}{$R^{\text{final}}$} \\
\cmidrule(lr){2-7}\cmidrule(l){8-13}
 & \multicolumn{3}{c}{$d=3$} & \multicolumn{3}{c}{$d=4$} & \multicolumn{3}{c}{$d=3$} & \multicolumn{3}{c}{$d=4$} \\
\cmidrule(lr){2-4}\cmidrule(lr){5-7}\cmidrule(lr){8-10}\cmidrule(l){11-13}
$n_{\text{init}}$ & 10 & 20 & 40 & 10 & 20 & 40 & 10 & 20 & 40 & 10 & 20 & 40 \\
\midrule
\ours{}$^{\dagger}$ & \textbf{3.1}{\scriptsize$\pm$.1} & 2.8{\scriptsize$\pm$.1} & 2.8{\scriptsize$\pm$.1} & 3.3{\scriptsize$\pm$.1} & 3.5{\scriptsize$\pm$.1} & 3.4{\scriptsize$\pm$.1} & \textbf{2.8}{\scriptsize$\pm$.1} & 2.6{\scriptsize$\pm$.1} & 2.7{\scriptsize$\pm$.1} & 3.4{\scriptsize$\pm$.1} & 3.5{\scriptsize$\pm$.1} & 3.3{\scriptsize$\pm$.1} \\
\ourspfn{}$^{\dagger}$ & 3.1{\scriptsize$\pm$.1} & \textbf{2.7}{\scriptsize$\pm$.1} & \textbf{2.6}{\scriptsize$\pm$.1} & 3.6{\scriptsize$\pm$.1} & 3.5{\scriptsize$\pm$.1} & 3.3{\scriptsize$\pm$.1} & 2.8{\scriptsize$\pm$.1} & \textbf{2.5}{\scriptsize$\pm$.1} & \textbf{2.5}{\scriptsize$\pm$.1} & 3.5{\scriptsize$\pm$.1} & 3.5{\scriptsize$\pm$.1} & 3.3{\scriptsize$\pm$.1} \\
\oursgp{}$^{\dagger}$ & 3.6{\scriptsize$\pm$.1} & 3.3{\scriptsize$\pm$.1} & 3.3{\scriptsize$\pm$.1} & 3.6{\scriptsize$\pm$.1} & \textbf{3.2}{\scriptsize$\pm$.1} & \textbf{3.1}{\scriptsize$\pm$.1} & 2.8{\scriptsize$\pm$.1} & 2.8{\scriptsize$\pm$.1} & 2.7{\scriptsize$\pm$.1} & \textbf{3.2}{\scriptsize$\pm$.1} & \textbf{3.0}{\scriptsize$\pm$.1} & \textbf{2.9}{\scriptsize$\pm$.1} \\
PFNs4BO$^{\dagger}$ & 3.8{\scriptsize$\pm$.1} & 4.2{\scriptsize$\pm$.1} & 4.2{\scriptsize$\pm$.1} & 4.1{\scriptsize$\pm$.1} & 4.4{\scriptsize$\pm$.1} & 5.1{\scriptsize$\pm$.1} & 4.9{\scriptsize$\pm$.1} & 5.1{\scriptsize$\pm$.1} & 5.1{\scriptsize$\pm$.1} & 5.2{\scriptsize$\pm$.1} & 5.3{\scriptsize$\pm$.1} & 5.4{\scriptsize$\pm$.1} \\
\fibo{}$^{\dagger}$ & 5.5{\scriptsize$\pm$.1} & 5.6{\scriptsize$\pm$.1} & 5.6{\scriptsize$\pm$.1} & 6.1{\scriptsize$\pm$.1} & 5.8{\scriptsize$\pm$.1} & 5.3{\scriptsize$\pm$.1} & 5.8{\scriptsize$\pm$.1} & 5.9{\scriptsize$\pm$.1} & 5.9{\scriptsize$\pm$.1} & 5.6{\scriptsize$\pm$.1} & 5.6{\scriptsize$\pm$.1} & 5.6{\scriptsize$\pm$.1} \\
\gpmap{} & 4.3{\scriptsize$\pm$.1} & 4.5{\scriptsize$\pm$.1} & 4.8{\scriptsize$\pm$.1} & \textbf{3.2}{\scriptsize$\pm$.1} & 3.3{\scriptsize$\pm$.1} & 3.6{\scriptsize$\pm$.1} & 4.5{\scriptsize$\pm$.1} & 4.6{\scriptsize$\pm$.1} & 4.6{\scriptsize$\pm$.1} & 3.5{\scriptsize$\pm$.1} & 3.5{\scriptsize$\pm$.1} & 3.8{\scriptsize$\pm$.1} \\
\gpfb{} & 5.2{\scriptsize$\pm$.1} & 5.1{\scriptsize$\pm$.1} & 4.9{\scriptsize$\pm$.1} & 4.5{\scriptsize$\pm$.1} & 4.5{\scriptsize$\pm$.1} & 4.3{\scriptsize$\pm$.1} & 4.6{\scriptsize$\pm$.1} & 4.7{\scriptsize$\pm$.1} & 4.8{\scriptsize$\pm$.1} & 3.9{\scriptsize$\pm$.1} & 3.7{\scriptsize$\pm$.1} & 3.8{\scriptsize$\pm$.1} \\
Sobol & 7.5{\scriptsize$\pm$.1} & 7.7{\scriptsize$\pm$.1} & 7.8{\scriptsize$\pm$.1} & 7.6{\scriptsize$\pm$.1} & 7.9{\scriptsize$\pm$.0} & 7.9{\scriptsize$\pm$.0} & 7.7{\scriptsize$\pm$.1} & 7.7{\scriptsize$\pm$.1} & 7.8{\scriptsize$\pm$.0} & 7.8{\scriptsize$\pm$.0} & 7.9{\scriptsize$\pm$.0} & 7.9{\scriptsize$\pm$.0} \\
\midrule
\ours{}$^{\star}$ & 4.2{\scriptsize$\pm$.1} & \textbf{3.9}{\scriptsize$\pm$.1} & 3.6{\scriptsize$\pm$.1} & 4.7{\scriptsize$\pm$.2} & 4.4{\scriptsize$\pm$.2} & 3.9{\scriptsize$\pm$.1} & 3.1{\scriptsize$\pm$.1} & 3.0{\scriptsize$\pm$.1} & \textbf{3.0}{\scriptsize$\pm$.1} & 3.8{\scriptsize$\pm$.2} & 3.5{\scriptsize$\pm$.1} & 3.2{\scriptsize$\pm$.1} \\
\ourspfn{}$^{\star}$ & 5.0{\scriptsize$\pm$.1} & 4.6{\scriptsize$\pm$.1} & 4.3{\scriptsize$\pm$.1} & 5.1{\scriptsize$\pm$.2} & 4.9{\scriptsize$\pm$.2} & 4.6{\scriptsize$\pm$.2} & 4.4{\scriptsize$\pm$.1} & 4.1{\scriptsize$\pm$.1} & 3.7{\scriptsize$\pm$.1} & 4.3{\scriptsize$\pm$.2} & 4.2{\scriptsize$\pm$.2} & 4.1{\scriptsize$\pm$.1} \\
\oursgp{}$^{\star}$ & 4.4{\scriptsize$\pm$.1} & 4.0{\scriptsize$\pm$.1} & \textbf{3.6}{\scriptsize$\pm$.1} & 4.7{\scriptsize$\pm$.2} & 3.9{\scriptsize$\pm$.2} & 3.7{\scriptsize$\pm$.1} & \textbf{2.9}{\scriptsize$\pm$.1} & \textbf{2.8}{\scriptsize$\pm$.1} & 3.0{\scriptsize$\pm$.1} & 3.8{\scriptsize$\pm$.2} & \textbf{3.1}{\scriptsize$\pm$.1} & \textbf{2.7}{\scriptsize$\pm$.1} \\
PFNs4BO$^{\star}$ & \textbf{3.6}{\scriptsize$\pm$.1} & 3.9{\scriptsize$\pm$.1} & 4.5{\scriptsize$\pm$.1} & 3.9{\scriptsize$\pm$.2} & 4.7{\scriptsize$\pm$.2} & 5.1{\scriptsize$\pm$.2} & 5.1{\scriptsize$\pm$.1} & 5.4{\scriptsize$\pm$.1} & 5.7{\scriptsize$\pm$.1} & 5.1{\scriptsize$\pm$.2} & 5.7{\scriptsize$\pm$.1} & 6.0{\scriptsize$\pm$.1} \\
\fibo{}$^{\star}$ & 5.3{\scriptsize$\pm$.1} & 5.5{\scriptsize$\pm$.1} & 5.4{\scriptsize$\pm$.1} & 5.5{\scriptsize$\pm$.2} & 5.5{\scriptsize$\pm$.1} & 5.6{\scriptsize$\pm$.1} & 6.3{\scriptsize$\pm$.1} & 6.3{\scriptsize$\pm$.1} & 6.2{\scriptsize$\pm$.1} & 5.8{\scriptsize$\pm$.1} & 6.1{\scriptsize$\pm$.1} & 6.1{\scriptsize$\pm$.1} \\
NAP$^{\star}$ & 6.0{\scriptsize$\pm$.1} & 6.1{\scriptsize$\pm$.1} & 6.5{\scriptsize$\pm$.1} & 6.0{\scriptsize$\pm$.2} & 5.7{\scriptsize$\pm$.2} & 5.9{\scriptsize$\pm$.1} & 6.2{\scriptsize$\pm$.1} & 6.1{\scriptsize$\pm$.1} & 6.1{\scriptsize$\pm$.1} & 6.9{\scriptsize$\pm$.1} & 6.8{\scriptsize$\pm$.1} & 6.6{\scriptsize$\pm$.1} \\
\gpmap{} & 3.6{\scriptsize$\pm$.1} & 3.9{\scriptsize$\pm$.1} & 4.1{\scriptsize$\pm$.1} & \textbf{2.7}{\scriptsize$\pm$.1} & \textbf{2.8}{\scriptsize$\pm$.1} & \textbf{3.3}{\scriptsize$\pm$.1} & 4.2{\scriptsize$\pm$.1} & 4.4{\scriptsize$\pm$.1} & 4.3{\scriptsize$\pm$.1} & \textbf{3.2}{\scriptsize$\pm$.1} & 3.3{\scriptsize$\pm$.1} & 3.7{\scriptsize$\pm$.1} \\
\gpfb{} & 4.9{\scriptsize$\pm$.1} & 4.7{\scriptsize$\pm$.1} & 4.4{\scriptsize$\pm$.1} & 4.4{\scriptsize$\pm$.1} & 4.4{\scriptsize$\pm$.1} & 4.1{\scriptsize$\pm$.1} & 4.3{\scriptsize$\pm$.1} & 4.5{\scriptsize$\pm$.1} & 4.5{\scriptsize$\pm$.1} & 3.7{\scriptsize$\pm$.1} & 3.6{\scriptsize$\pm$.1} & 3.7{\scriptsize$\pm$.1} \\
Sobol & 8.0{\scriptsize$\pm$.1} & 8.4{\scriptsize$\pm$.1} & 8.6{\scriptsize$\pm$.1} & 8.2{\scriptsize$\pm$.1} & 8.7{\scriptsize$\pm$.1} & 8.8{\scriptsize$\pm$.1} & 8.3{\scriptsize$\pm$.1} & 8.5{\scriptsize$\pm$.1} & 8.6{\scriptsize$\pm$.1} & 8.5{\scriptsize$\pm$.1} & 8.7{\scriptsize$\pm$.0} & 8.8{\scriptsize$\pm$.0} \\
\bottomrule
\end{tabular*}

\vspace{-10pt}
\end{table}

\paragraph{Mixture Prior}
Under the mixture prior, all three variants of \ours{} consistently outperform the competing learned methods. On the 3D benchmarks, they achieve the best mean rank on both $R^{\text{AUC}}$ and $R^{\text{final}}$ across all initial context sizes, substantially outperforming PFNs4BO and \fibo{} despite being trained on the same prior. Figure~\ref{fig:main} shows that the three variants behave similarly, indicating that the gain is mostly due to the learned query selection rather than to the surrogate choice. In 4D the GP baselines become more competitive, partly because two of the five benchmark problems are well specified by GP surrogates. Nevertheless, \oursgp{} attains the best $R^{\text{final}}$ rank at every context size, while all three variants of \ours{} significantly outperform PFNs4BO and \fibo{}. Overall, the results indicate that \ours{} outperforms in-context methods pre-trained on the same prior, and matches or outperforms GP-based BO.

\paragraph{HEBO and GP-based Priors}
Similar results are found when comparing models trained on the HEBO or GP-based priors. At small context sizes, PFNs4BO and GP-based BO remain competitive in terms of average regret. As more observations become available, \ours{} achieves the best or near-best final regret across all 3D settings and the strongest final-regret performance in 4D for moderate and large context sizes.  PFNs4BO often performs well at the beginning of the optimization process, whereas \ours{} consistently reaches better solutions by the end of the budget. Finally, \ours{} achieves these results at substantially lower computational cost. Because each query requires only forward passes, without surrogate fitting or acquisition function maximisation, \ours{} proposes queries about $11$ times faster than \gpmap{} and more than two orders of magnitude faster than \pfnhebo{} (Appendix~\ref{app:bobatch}).

\subsection{Controlled Comparisons and Effect of the Prior}
\label{sec:results-controlled}

\begin{figure}[t]
\centering
\setlength{\abovecaptionskip}{2pt}
\includegraphics[width=\textwidth]{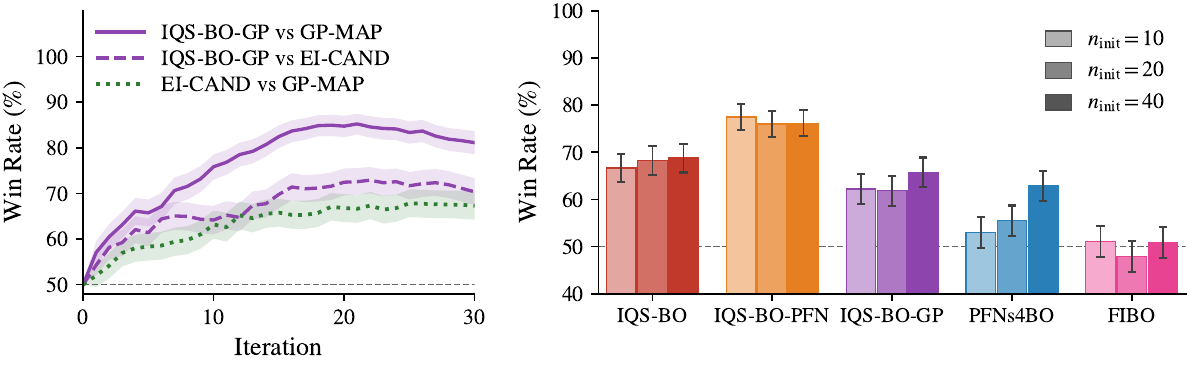}
\caption{Paired win rates (Section~\ref{sec:setup}), ten 3D problems, 900 paired runs, mean with $95\%$ CI. Left, on $\tilde r(t)$ per iteration, $n_{\text{init}}=40$. Right, on $R^{\text{final}}$, each model pre-trained on the mixture prior against its original prior.}
\label{fig:controlled}
\vspace{-15pt}
\end{figure}

\paragraph{Learned Query Selection Against Expected Improvement}
The methods compared in Section~\ref{sec:results-main} differ in more than how they select queries. To isolate the contribution of the network, we compare \oursgp{}, pre-trained on the mixture prior, with \gpmap{}, which fits the same GP at every round. \gpmap{} maximises expected improvement over the whole domain, whereas \oursgp{} lets the network choose among a finite set of candidates. We therefore add a control, \ctrl{}, which uses the GP, the candidate set and the batch rule of \oursgp{} but scores the candidates by expected improvement, so that \oursgp{} and \ctrl{} differ only in how candidates are scored (Appendix~\ref{app:controlled}). \oursgp{} outperforms \ctrl{} at every context size, and its advantage grows as more observations become available (Figure~\ref{fig:controlled}, left, and Table~\ref{tab:controlled}). \ctrl{} in turn outperforms \gpmap{}, mainly because its candidate set includes perturbations of the best observations (Appendix~\ref{app:ablations}). Taken together, these comparisons show that the network selects better queries than expected improvement from the same GP and the same candidates, and that both the network and the candidate design contribute to the gain of \oursgp{} over \gpmap{}.

\paragraph{Effect of the Prior}
We pre-train and evaluate other in-context BO methods on the mixture prior. Figure~\ref{fig:controlled} compares each model pre-trained on the mixture prior with the same model pre-trained on its original GP-based prior, which measures the gain of the mixture over pre-training on GP samples alone(Appendix~\ref{app:prior-results}). The mixture prior improves \ours{}, \ourspfn{} and PFNs4BO in most runs, most strongly in the early rounds, and the same pattern holds in 4D. This gain can be explained by the more complex and varied functional forms, such as local bumps and plateaus, which the models see during pre-training and which a GP prior alone specifies poorly. Nonetheless, models pre-trained on the mixture prior perform as well as their GP-prior counterparts on problems for which GP surrogates are well specified, such as Rosenbrock and the GP-prior draws, with the exception of a few cases discussed in Appendix~\ref{app:prior-results}. PFNs4BO gains less than \ours{}, likely because its prior only shapes the posterior, which expected improvement reduces to a mean and a variance at each point, whereas in \ours{} the prior shapes the choice of query directly. \fibo{} is the exception, with a negligible effect, possibly because it compresses the observations into a single averaged context vector, which discards much of the added functional variety. In summary, the benefit of the mixture prior is not specific to \ours{}, as it also improves PFNs4BO, and it is largest at early evaluations.

\subsection{Batch Bayesian Optimisation}
\label{sec:results-batch}
In many applications, such as laboratory experiments, several queries are evaluated in parallel, and the batch size can be large \citep{sin2025highly}. We evaluate batch sizes $q\in\{10,20,50\}$ on seven of the 3D problems, starting from $n_{\text{init}}=40$ points and running to 200 evaluations. For each batch size, \ours{} and \ourspfn{} are pre-trained on the HEBO prior. Figure~\ref{fig:batch} reports the GAP after 200 evaluations against the proposal time, the total time spent selecting all queries of a run. Further results can be found in Appendix~\ref{app:bobatch}.

\begin{figure}[t!]
\centering
\setlength{\abovecaptionskip}{2pt}
\includegraphics[width=\textwidth]{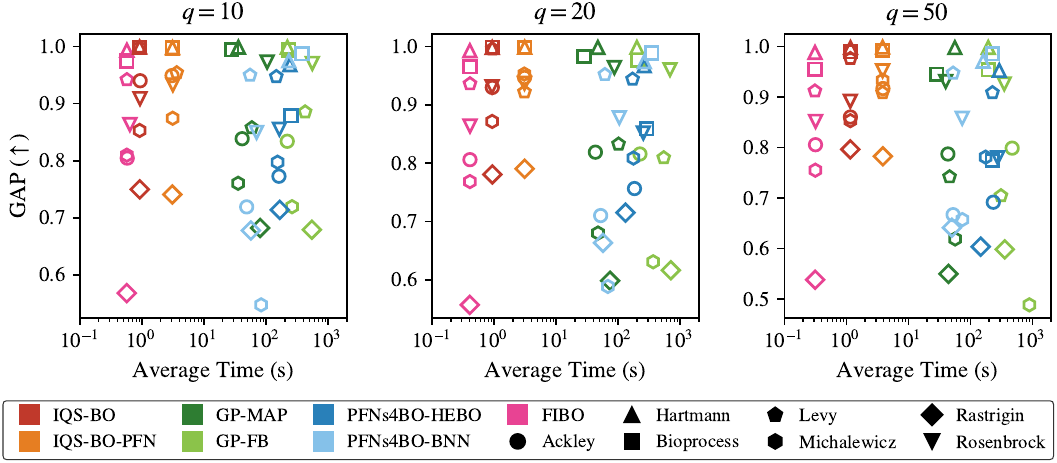}
\caption{GAP after 200 evaluations against the time spent proposing queries over a complete run, one marker per problem and method, seven 3D problems, $n_{\text{init}}=40$, mean over 90 seeds. Upper left is better.}
\label{fig:batch}
\vspace{-15pt}
\end{figure}

\paragraph{Performance}
\ours{} and \ourspfn{} reach the highest GAP at every batch size, and their margin over the baselines grows with the batch size. Every baseline loses quality as the batch grows, whereas \ours{} is almost unaffected, so that at $q=50$ it is still ahead of the best baseline at $q=10$. This robustness comes from pre-training at the deployed batch size, where the network learns to account for earlier picks in the batch and to diversify it (Appendix~\ref{app:ablations}). Our models have the best GAP on every problem except Rosenbrock, where \gpmap{} leads as in the sequential setting, and Hartmann at $q=50$, where \gpfb{} leads by a negligible margin. The margins are largest on the multimodal problems, Ackley, Levy and Rastrigin.

\paragraph{Computational Cost}
\ours{} achieves excellent batch-BO performance at a fraction of the cost of most baselines. Since each point of a batch requires one forward pass, proposing all queries of a run takes $0.9$ to $1.2$\,s, and about $3$\,s for \ourspfn{}, which also runs the frozen PFN. Only \fibo{} is faster, with $0.3$ to $0.6$\,s, because it samples the points of a batch independently, but it consistently reaches a lower GAP. The GP baselines and \pfnhebo{}, which maximise an acquisition function for every batch, are one to three orders of magnitude slower, and \ours{} is $38$ to $68$ times faster than \gpmap{}. Overall, \ours{} scales well to large batches, reaching the highest GAP at every batch size while proposing queries more than an order of magnitude faster than GP-based BO.

\subsection{Ablation Study}
\label{sec:ablation}
\begin{wraptable}{r}{0.47\textwidth}
\vspace{-22pt}
\centering\footnotesize
\setlength{\tabcolsep}{3pt}
\renewcommand{\arraystretch}{1.0}
\caption{Win rate (\%) on $R^{\text{final}}$ of \ours{} against each variant, means $\pm$$95\%$ CI over ten 3D problems and 900 paired runs.}
\label{tab:ablation}
\vspace{2pt}
\begin{tabular}{@{}lccc@{}}
\toprule
Variant & $n_{\text{init}}{=}10$ & $20$ & $40$ \\
\midrule
space-filling & $84.2{\scriptstyle\pm2.2}$ & $87.9{\scriptstyle\pm2.0}$ & $88.4{\scriptstyle\pm1.8}$ \\
$M{=}128$     & $57.9{\scriptstyle\pm3.1}$ & $56.7{\scriptstyle\pm3.2}$ & $56.6{\scriptstyle\pm3.1}$ \\
$M{=}2048$    & $48.3{\scriptstyle\pm3.2}$ & $46.4{\scriptstyle\pm3.2}$ & $50.3{\scriptstyle\pm3.2}$ \\
top-$q$ batch & $59.2{\scriptstyle\pm3.2}$ & $57.9{\scriptstyle\pm3.2}$ & $55.6{\scriptstyle\pm3.2}$ \\
\bottomrule
\end{tabular}
\vspace{-6pt}
\end{wraptable}

To conclude, we ablate three design choices of \ours{}, namely the composition and the size of the candidate set and the sequential selection of batch points (Table~\ref{tab:ablation}). The composition of the candidate set matters most. The network can only refine the incumbent if candidates lie near it, so replacing the perturbations of the best observations by space-filling candidates degrades performance in most runs. These perturbations also explain why \ctrl{} outperforms \gpmap{} in Section~\ref{sec:results-controlled}. Regarding the size of the candidate set, fewer candidates hurt, whereas more candidates yield only a modest improvement in $R^{\text{final}}$ over our default, and none in $R^{\text{AUC}}$. Finally, selecting batch points sequentially, with earlier picks as pending points, outperforms taking the best candidates in a single forward pass, so the network relies on the pending points to improve quality and diversify the batch.

\section{Conclusion}
\label{sec:conclusion}

We introduced \ours{}, a Bayesian optimisation method that selects the next query in-context. Instead of fitting a surrogate and maximising an acquisition function, \ours{} predicts the posterior probability that each candidate is the best in a set and selects the candidate with the highest probability. It outperforms in-context methods pre-trained on the same prior, matches or outperforms GP-based BO, and in batch BO reaches the highest performance while proposing queries more than an order of magnitude faster than GP-based BO. Finally, we showed that enriching the pre-training prior with functions that stationary GPs model poorly can further improve performance, especially at early evaluations.

Several directions for future work remain. The models are pre-trained for a fixed input dimension and batch size, and our experiments focus on three and four dimensions. Queries are limited to a finite candidate set whose design affects performance, and the objective is myopic, as it optimizes a one-step decision rule. Natural extensions are to train a single model applicable across dimensions and batch sizes, to learn the candidate set jointly with the selection, and to extend the objective beyond one-step selection.

\subsection*{AI use statement}
No generative AI tools were used as part of the core methodology of this work. The method, the experimental design, the implementation of all models and baselines, the data and the interpretation of results are entirely the work of the authors. Generative AI tools were used as writing assistance, to improve the readability of the text and to format references. This use does not affect the scientific content, methodology or originality of the research. The authors take full responsibility for the final content of this work.

\subsection*{Reproducibility statement}
All information needed to reproduce our results is given in the paper. Appendix~\ref{app:model} gives the full architecture and batch-selection rule, Appendix~\ref{app:datagen} the pre-training priors, candidate sampler and all training hyperparameters, and Appendix~\ref{app:baselines} the configuration of every baseline, including our reimplementation of FIBO. The benchmark problems and their global optima are defined in Appendix~\ref{app:benchmarks}, the evaluation protocol in Section~\ref{sec:setup}, and the complete results behind every figure and table in Appendix~\ref{app:results}.

\bibliography{references}
\bibliographystyle{preprint}

\clearpage
\appendix

\section{Comparison to In-Context and Learned Acquisition Methods}
\label{app:related}

This appendix expands the discussion of in-context BO and learned acquisition functions in Section~\ref{sec:intro}. We call a method in-context if, at deployment, the network reads the observations directly and nothing is fitted or updated. Appendix~\ref{app:incontext} reviews the in-context methods, Appendix~\ref{app:learned-af} the learned acquisition methods, which read a refitted GP posterior and are therefore not in-context, and Appendix~\ref{app:relation} relates both to \ours{}. Table~\ref{tab:pretrain-compare} summarises what each method learns, from which signal, and how it proposes the next query.

\begin{table}[h]
\centering\small\setlength{\tabcolsep}{4pt}
\caption{What each method learns, from which signal, and how it proposes the next query.}
\label{tab:pretrain-compare}
\begin{tabular}{@{}lllll@{}}
\toprule
Method & Learned object & Training signal & Surrogate & Next query \\
\midrule
PFNs4BO       & posterior predictive & function values, prior      & is one          & maximise acquisition \\
FIBO          & density over optimum & global optimiser, prior     & none            & sample \\
MetaBO, FSAF  & candidate score      & RL reward, solved tasks     & GP, refit       & argmax \\
NAP           & candidate score      & RL reward, solved tasks     & learned jointly & argmax \\
POLAR         & candidate score      & RL reward, solved tasks     & pretrained      & argmax \\
\midrule
\ours{}       & posterior over set   & best candidate, prior       & none or frozen  & argmax \\
\bottomrule
\end{tabular}
\end{table}

\subsection{In-Context Methods}
\label{app:incontext}

\paragraph{Amortised surrogate (PFNs4BO)}
PFNs4BO \citep{muller2023pfns4bo} trains a PFN on synthetic priors to return the posterior predictive $p(y\mid x,\mathcal D)$ in one forward pass. The surrogate is in-context, but the acquisition function is chosen and maximised numerically at every round, and it is restricted to those with a closed form on the PFN's discrete Riemann distribution (Figure~\ref{fig:bo-amortisation}).

\paragraph{Amortised acquisition values (PFNs4BO-KG, $\alpha$-PFN)}
Two methods train a PFN to predict the value of a specified acquisition function. PFNs4BO reports, in its appendix, a PFN trained to approximate the knowledge-gradient acquisition directly. $\alpha$-PFN \citep{rakotoarison2026alphapfn} trains a first PFN conditioned on information about the optimum and a second PFN to predict the expected information gain of entropy search from values measured with the first, replacing the Monte Carlo approximations of entropy search by one forward pass per candidate. Both approaches are in-context. In both cases, however, the acquisition rule is chosen in advance, the network regresses its value, and the query is still obtained by maximising that value over $\mathcal X$.

\paragraph{Amortised optimiser location (FIBO)}
\fibo{} \citep{fibo2025} trains a normalising flow on pairs $(\mathcal D,x^*)$ to model the distribution $p(x^*\mid\mathcal D)$ of the optimiser location, and proposes queries by sampling from it. This amounts to Thompson sampling under the pre-training prior (Figure~\ref{fig:bo-amortisation}). The method is fully in-context, but its decision rule is fixed, it cannot be combined with a surrogate, and its batches consist of independent draws. Pre-training is also costly, since every training pair requires the global optimiser of a prior draw, and the rejection sampling used to make $x^*$ uniform over the domain discards most draws (Appendix~\ref{app:fibo}).

\paragraph{Amortised policies trained by reinforcement learning (NAP, TAMO)}
In NAP \citep{maraval2023nap}, a transformer neural process reads the observations and a candidate set and queries the highest-scoring candidate. It is trained on solved tasks of one family, such as hyperparameter-optimisation datasets for one search space, and transfers only within it. Lacking a label for the query, it learns the score with proximal policy optimisation (PPO) \citep{schulman2017ppo}, a policy-gradient reinforcement learning method, using the regret as the reward, and compensates for the sparsity of this reward with an auxiliary likelihood head. TAMO \citep{zhang2025tamo}, a dimension-agnostic transformer for multi-objective optimisation, maps the observations to the next design and is trained by reinforcement learning on hypervolume improvement. Like \fibo{}, both amortise the BO fully in-context and cannot take a surrogate's information as input.

\paragraph{Learned decision on a pretrained representation (POLAR)}
POLAR \citep{huang2026polar} trains a small head that scores the candidates from the hidden representations of a pretrained tabular foundation model, TabICLv2 \citep{qu2026tabiclv2} by default and TabPFN-2.5 \citep{grinsztajn2025tabpfn25} in an ablation. As in \ours{} with a fixed surrogate, the representation and the decision are separate modules, but the two methods differ in three respects. First, the head is trained by policy gradient on simulated rollouts with one-step improvements as rewards, so its score approximates no stated quantity, whereas our network approximates the posterior probability of Proposition~\ref{prop:posterior-best}. Second, POLAR is trained on the meta-training tasks of each benchmark search space rather than on a synthetic prior. Third, the backbone is not kept fixed. By default, it is fine-tuned on these tasks with an auxiliary prediction loss, so the gains of POLAR over non-adapted baselines combine the learned decision with a task-specific adaptation of the surrogate. Our network, in contrast, is pre-trained only on synthetic priors and either keeps the surrogate frozen or can be trained without one. We do not include POLAR as a baseline, since no public implementation was available at the time of writing.

\subsection{Learned Acquisition Methods}
\label{app:learned-af}

MetaBO, FSAF and BOFormer also learn the query decision, but by reinforcement learning on top of a Gaussian process. At every round, a GP is refitted to the observations, and a network selects a point from a finite candidate set using only summaries of the GP posterior at each candidate, such as its mean and variance. The network is trained to maximise a reward based on the regret or, for multi-objective problems, the hypervolume. Since it never reads the observations, all information about a new task reaches it through the GP fit, so none of these methods is in-context.

\paragraph{MetaBO \citep{volpp2020metabo}}
MetaBO replaces only the acquisition function. A neural network scores each candidate from its location, the GP posterior mean and variance and the remaining budget, and the policy is a softmax over these scores on Sobol points augmented with local maxima of the score. The network is meta-trained with PPO on related source tasks, so that it encodes the structure shared across this family.

\paragraph{FSAF \citep{hsieh2021fsaf}}
FSAF treats the acquisition function as a $Q$-function, parameterised by a Bayesian deep $Q$-network on features of the GP posterior. It is meta-trained with Bayesian model-agnostic meta-learning on a set of training functions, with a demonstration policy based on a standard acquisition function as a prior, and adapted to each new task by a few gradient steps on its first observations. FSAF therefore requires both a GP fit at every round and gradient updates on every new task.

\paragraph{BOFormer \citep{hung2025boformer}}
BOFormer addresses multi-objective BO and learns the acquisition function as a $Q$-function over sequences by non-Markovian deep $Q$-learning, reading the GP posterior rather than the observations.

\subsection{Relation to Our Method}
\label{app:relation}

Like the in-context methods of Appendix~\ref{app:incontext}, \ours{} reads the observations directly, and like the learned acquisition methods of Appendix~\ref{app:learned-af}, it selects the next query from a finite candidate set. It differs from all of them in three respects: how the decision is learned, what it is trained on, and how a surrogate can be used.

First, the decision is learned by supervised classification. Under a synthetic prior, every function can be evaluated at all candidates, so the index of the best candidate is available as a label for every task. The network is trained with cross-entropy on this label, whereas NAP, POLAR, MetaBO, FSAF and BOFormer learn from a reward, and PFNs4BO-KG and $\alpha$-PFN regress a specified acquisition function. Second, training requires only a prior over functions, not a pool of solved tasks as in NAP and POLAR. A trained model therefore applies to any problem of its input dimension and batch size, without meta-training on the target family or adaptation at test time. Third, the same network and loss can be used with or without a fixed surrogate. With a surrogate, only the query selection is amortised, and without one, the whole BO loop runs in-context.

\section{Model Details}
\label{app:model}
This appendix details the model of Section~\ref{sec:method}: the token inputs and attention structure, the network and deployment settings, and the sequential selection of batch points, with its related extended training loss and algorithms.

\subsection{Tokens and Inputs}
\label{app:features}

\paragraph{Surrogate-free model, single query}
Without a surrogate, \ours{} reads one token per observation and one per candidate. Every token carries the point channels of Table~\ref{tab:features}, and observation tokens also carry the value channels. Each token $s$ is embedded as $h^{(0)}_s=W_{\mathrm{feat}}\,t_s$. Observation tokens attend only to each other, and candidate tokens only to the observations. The features of a candidate are carried forward by the residual connections of every layer $\ell$,
\[
\tilde h^{(\ell)}_s=\mathrm{LN}\big(h^{(\ell)}_s+\mathrm{Attn}(h^{(\ell)})_s\big),\qquad
h^{(\ell+1)}_s=\mathrm{LN}\big(\tilde h^{(\ell)}_s+\mathrm{FF}(\tilde h^{(\ell)}_s)\big),
\]
and the head maps the final representation of $c_j$ to $z_j$. The score therefore depends only on $(\mathcal D,c_j)$ and is unchanged by adding, removing or permuting other candidates.

\begin{table}[h]
\vspace{-8pt}
\centering\small
\caption{Token channels. Statistics are over the $n$ observed values, and $x^{+}$ is the incumbent. Channels are zero on tokens that do not carry them. The noise channel holds the task's noise level in pre-training and a fixed $\sigma_0$ at deployment (Appendix~\ref{app:deploycfg}).}
\label{tab:features}
\begin{tabularx}{\textwidth}{@{}lcl>{\raggedright\arraybackslash}X@{}}
\toprule
channel & width & carried by & definition \\
\midrule
coordinates & $d$ & all tokens & $x\in[0,1]^d$ \\
distances & 2 & all tokens & $\min_i\lVert x-x_i\rVert_2$ and $\lVert x-x^{+}\rVert_2$ \\
role & 4 & all tokens & one-hot token type \\
state & 2 & all tokens & $1/\sqrt n$, $\log_{10}\sigma_{\mathrm{noise}}$ \\
\midrule
value & 1 & observations, filled pending & $(y_i-\bar y)/s_y$ with $s_y\ge10^{-3}$, or fantasy $\hat y$ \\
rank & 1 & observations & $\Phi^{-1}\big((r_i+\tfrac12)/n\big)$ \\
gap & 1 & observations & $\tilde y_i-\max_k\tilde y_k$ \\
incumbent & 1 & observations & $\mathbf 1[x_i=x^{+}]$ \\
\bottomrule
\end{tabularx}
\end{table}

\paragraph{With a surrogate}
\ourspfn{} and \oursgp{} use the same tokens, channels and attention mask of \ours, and only the embedding changes to $h^{(0)}_s=W_{\mathrm{feat}}\,t_s+W_\phi\,m_s$, where $m_s$ holds the surrogate features of token $s$ (Table~\ref{tab:surrogate-features}). The surrogate receives only the coordinates and values of the observations and is evaluated before the forward pass of the network. The PFN is run on the context-standardised values, without the power transform that PFNs4BO applies to $y$, and its predictive summaries are not standardised across candidates, since this would make each score depend on the other candidates.

\begin{table}[h]
\vspace{-8pt}
\centering\small
\caption{Surrogate features $m_s$. The surrogate is conditioned on the observations and on the filled pending points at their fantasy values. Empty pending tokens receive zeros.}
\label{tab:surrogate-features}
\begin{tabularx}{\textwidth}{@{}llcl>{\raggedright\arraybackslash}X@{}}
\toprule
model & feature & width & carried by & definition \\
\midrule
\multirow{2}{*}{\ourspfn{}}
 & layer states & 512 & all tokens & $\sum_{\ell}\operatorname{softmax}(\alpha)_\ell\,\mathrm{state}_{\ell,s}$, learned $\alpha\in\mathbb R^{12}$ \\
 & summaries & 5 & candidates & mean, $68\%$ half-width, $10/50/90\%$ quantiles \\
\midrule
\multirow{4}{*}{\oursgp{}}
 & mean & 1 & all tokens & posterior mean $\mu(x)$ \\
 & std.\ dev. & 1 & all tokens & latent posterior $\sigma(x)$ \\
 & correlation & 1 & all tokens & $\operatorname{corr}\big(f(x),f(x^{+})\big)$ \\
 & hyperparameters & $d+2$ & all tokens & log-lengthscales, log-noise, constant mean \\
\bottomrule
\end{tabularx}
\end{table}

\paragraph{Batch selection}
For a batch of $q$ points, every model also carries $P=q-1$ pending tokens, which hold the points already selected in the current round (Appendix~\ref{app:batch}). An empty pending token holds the coordinates of the incumbent and no value. Once a point is selected, the next pending token takes its coordinates and, in its standardised-value channel, a fantasy value $\hat y$. For \ours{}, $\hat y=0$, the mean of the standardised context, and for \ourspfn{} and \oursgp{}, $\hat y$ is the surrogate's posterior mean. Observation and pending tokens attend only to each other, and candidate tokens attend to both, so the score of a candidate becomes $z_j(\mathcal D,\mathcal P,c_j)$, with $\mathcal P$ the pending points. 

With a surrogate, the filled pending points are also passed to the surrogate at their approximated values, and empty pending tokens receive zero surrogate features. The GP is conditioned on the observations and the filled pending points, with the hyperparameters of the round and without refitting. The PFN receives the filled pending points as additional context points.

\subsection{Network and Deployment}
\label{app:architecture}
\label{app:deploycfg}
The encoder of Section~\ref{sec:architecture} is a post-norm transformer with 4 layers of width 512, 8 attention heads, feed-forward width 2048, GELU activations and no dropout. The scoring head consists of Linear$(512,512)$, GELU and Linear$(512,1)$. The network has 12.88M trainable parameters for \ours{}, 12.89M for \oursgp{} and 13.15M for \ourspfn{}, the latter excluding the 26.8M parameters of the frozen PFN. At deployment, every run uses $M=512$ candidates. The noise channel is set to the low end of each prior's noise range, $\sigma_0=10^{-3}$ for models pre-trained on the mixture prior and $\sigma_0=0.025$ for models pre-trained on the HEBO prior.
 
\subsection{Batch Selection}
\label{app:batch}

The points of a batch are selected one at a time through the pending tokens of Appendix~\ref{app:features}. For $q=2$, the loss of one task is
\begin{equation}
\mathcal L=\mathrm{CE}\big(z^{(0)},J^*\big)+\mathbf 1[j_0\neq J^*]\;\mathrm{CE}\big(z^{(1)},J^*\big),\qquad j_0\sim\softmax(z^{(0)}),
\label{eq:batch-loss}
\end{equation}
where $z^{(k)}$ are the scores at slot $k$. The first pick $j_0$ is sampled without gradient and written into a pending token, and the same candidates are scored again. The second term applies only if $j_0$ is not the best candidate, so by Proposition~\ref{prop:posterior-best} it is minimised by the posterior probability that each candidate is the best, given that $j_0$ is not. This mask is what teaches the network to diversify the batch. For $q>2$, the loss of each slot is counted only if no earlier pick is the best candidate, and the losses of $\min(q,4)$ random slots are averaged (Algorithm~\ref{alg:generic-pretrain}). At deployment, a fresh candidate set is drawn for every slot (Algorithm~\ref{alg:generic-infer}). Both algorithms reduce to Algorithms~\ref{alg:pretrain} and~\ref{alg:bo_round} for $q=1$. The models of the batch study in Section~\ref{sec:results-batch} are pre-trained on the HEBO prior.
 
\begin{algorithm}[h]
\caption{Pre-training on one task (surrogate $\phi$ or $\varnothing$, batch size $q$)}
\label{alg:generic-pretrain}
\begin{algorithmic}[1]
\Require prior $p(f)$, surrogate $\phi$ or $\varnothing$, network $\theta$, candidate-set size $M$, batch size $q$
\State Sample $f\sim p(f)$ and a context $\mathcal D$ from $f$
\State Draw $\mathcal C=\textsc{CandidateSet}(\mathcal D,M)$ \Comment{Appendix~\ref{app:candset}}
\State Compute the label $J^*=\argmax_{j\le M} f(c_j)$ \Comment{noise-free $f$}
\State If $\phi$ is a GP, fit its hyperparameters to $\mathcal D$ by MAP
\State Draw $G\subseteq\{0,\dots,q-1\}$ with $|G|=\min(q,4)$ \Comment{slots with a gradient}
\State $\mathcal P\gets\emptyset$, $\mathcal L\gets0$, $m\gets1$
\For{$k=0,\dots,\max G$}
  \State Compute $p^{(k)}_\theta(\cdot\mid\mathcal D,\mathcal P,\mathcal C)$ in one forward pass
  \If{$k\in G$}
    \State $\mathcal L\gets\mathcal L-m\,\log p^{(k)}_\theta(J^*\mid\mathcal D,\mathcal P,\mathcal C)$
  \EndIf
  \State Sample $j_k\sim p^{(k)}_\theta(\cdot\mid\mathcal D,\mathcal P,\mathcal C)$ without gradient, and set $m\gets m\,\mathbf 1[j_k\neq J^*]$
  \State $\mathcal P\gets\mathcal P\cup\{(c_{j_k},\hat y(\phi,c_{j_k}))\}$ \Comment{kriging believer $\hat y$}
\EndFor
\State \Return the task loss $\mathcal L/|G|$ \Comment{averaged over a mini-batch of tasks}
\end{algorithmic}
\end{algorithm}
 
\begin{algorithm}[h]
\caption{One optimisation round (surrogate $\phi$ or $\varnothing$, batch size $q$)}
\label{alg:generic-infer}
\begin{algorithmic}[1]
\Require observations $\mathcal D$, surrogate $\phi$ or $\varnothing$, network $\theta$, candidate-set size $M$, batch size $q$
\State If $\phi$ is a GP, fit its hyperparameters to $\mathcal D$ by MAP
\State $\mathcal P\gets\emptyset$
\For{$k=0,\dots,q-1$}
  \State Draw $\mathcal C_k=\textsc{CandidateSet}(\mathcal D,M)$ 
  \State Compute $p^{(k)}_\theta(\cdot\mid\mathcal D,\mathcal P,\mathcal C_k)$ in one forward pass
  \State $x_k\gets c_{\hat\jmath}\in\mathcal C_k$ with $\hat\jmath=\argmax_{j\le M}p^{(k)}_\theta(j\mid\mathcal D,\mathcal P,\mathcal C_k)$
  \State $\mathcal P\gets\mathcal P\cup\{(x_k,\hat y(\phi,x_k))\}$ \Comment{kriging believer $\hat y$}
\EndFor
\State Evaluate $f$ at $x_0,\dots,x_{q-1}$, observe $y_0,\dots,y_{q-1}$, and add them to $\mathcal D$
\end{algorithmic}
\end{algorithm}

\section{Pre-Training Details}
\label{app:datagen}

This appendix shows how the pre-training tasks of Section~\ref{sec:objective} are generated, and provides details for the two pre-training priors (Appendix~\ref{app:prior}), the candidate set sampler (Appendix~\ref{app:candset}), pre-training settings and computational costs (Appendix~\ref{app:training}).

\subsection{Priors}
\label{app:prior}

\paragraph{HEBO prior}
\citet{muller2023pfns4bo} propose several pre-training priors for PFNs4BO, among them a GP-based prior inspired by the surrogate of HEBO \citep{cowenrivers2022hebo}. We pre-train on the HEBO prior, with the kernel and hyperpriors of the released \pfnhebo{} model (Table~\ref{tab:hebo-prior}). A GP path cannot be drawn on a fixed grid in advance, since the context points are chosen adaptively from earlier values. The value of $f$ at each new point, including the candidates for the label, is therefore drawn from the GP conditioned on all values drawn so far. This gives the same joint distribution as a single draw of $f$ at all these points.

\begin{table}[h]
\vspace{-8pt}
\centering\small
\caption{HEBO prior. Functions are drawn as $f\sim\mathcal{GP}(0,s^2k_{3/2})$, with $k_{3/2}$ the Mat\'ern-3/2 ARD kernel with lengthscales $\ell$. Gamma distributions are given by shape and rate.}
\label{tab:hebo-prior}
\begin{tabularx}{\textwidth}{@{}l>{\raggedright\arraybackslash}X@{}}
\toprule
lengthscales & $\ell_i\sim\mathrm{Gamma}(1.2107,1.5212)$ \\
output scale & $s^2\sim\mathrm{Gamma}(0.8452,0.3994)$ \\
noise & $\sigma^2\sim\mathrm{LogNormal}(-4.63,0.5)$, with relative noise $\sqrt{\sigma^2/s^2}$ clamped to $[10^{-3},10]$ \\
active dimensions & each dimension dropped with probability $0.3$, keeping at least one \\
\bottomrule
\end{tabularx}
\end{table}

\begin{table}[t!]
\vspace{-8pt}
\centering\fontsize{11}{13}\selectfont
\setlength{\tabcolsep}{4pt}
\setlength{\abovecaptionskip}{2pt}
\setlength{\belowcaptionskip}{2pt}
\renewcommand{\arraystretch}{1.0}
\caption{Definition of the mixture prior. The warped family uses Kumaraswamy CDFs $\kappa_i$ \citep{kumaraswamy1980generalized,snoek2014warping}, with $\alpha_i,\beta_i$ clamped to $[0.2,5]$. In the bumps family, the division by $w_k$ is elementwise, and $\mathrm{sigm}$ denotes the logistic function.}
\label{tab:prior}
\begin{tabularx}{\textwidth}{@{}lc>{\raggedright\arraybackslash}X@{}}
\toprule
family & weight & definition \\
\midrule
stationary & $0.3$ & $\mathrm{rff}$ \\
warped & $\approx0.233$ & $\mathrm{rff}\circ\kappa$,\; $\kappa_i(x)=1-(1-x^{\beta_i})^{\alpha_i}$ \newline $\alpha_i,\beta_i\sim\mathrm{LogNormal}(0,0.75)$ \\
bumps & $\approx0.233$ & $\sum_{k\le K}h_k\exp\big(-\tfrac12\lVert(x-c_k)/w_k\rVert^2\big)$ \newline $K\sim\mathcal U\{2,\dots,10\}$,\; $c_k\sim\mathcal U[0,1]^d$,\; $w_{ki}\sim\mathrm{LogU}(0.03,2)$ \\
plateau & $\approx0.233$ & $\sum_{j\le L}\mathrm{sigm}\big((\mathrm{rff}-t_j)/\tau\big)+10^{-3}\,\mathrm{rff}$ \newline $L\sim\mathcal U\{2,\dots,7\}$,\; $\tau\sim\mathrm{LogU}(0.01,0.1)$ \\
\bottomrule
\end{tabularx}
\vspace{-6pt}
\end{table}

\paragraph{Mixture prior}
The mixture prior extends pre-training to functions that stationary GP surrogates model poorly. Each function is drawn from one of four families, with the weights of Table~\ref{tab:prior}. The stationary family covers smooth functions, on which GP surrogates are well specified. The warped family applies a monotone transformation to each input, so that the effective lengthscale varies across the domain \citep{snoek2014warping}. The bumps family sums a few Gaussian basins, which produces narrow optima within nearly flat regions. The plateau family applies steep sigmoids to a stationary path, which produces piecewise-flat functions with sharp steps, such as those caused by thresholds or regime changes in physical processes.

The stationary, warped and plateau families are built on the random-Fourier-feature approximation of a GP path \citep{rahimi2007random}, as in the pre-training prior of \fibo{} \citep{fibo2025}, which we extend from the squared-exponential kernel to Mat\'ern-$\nu$ kernels,
\[
\mathrm{rff}(x)=\sqrt{2/m}\,\textstyle\sum_{k=1}^{m}a_k\cos(\omega_k^\top x+b_k),\qquad m=256,
\]
where $a_k\sim\mathcal N(0,1)$, $b_k\sim\mathcal U(0,2\pi)$, $\omega_k$ is drawn from the Mat\'ern-$\nu$ spectral density with $\nu\sim\mathcal U\{\tfrac12,\tfrac32,\tfrac52,\infty\}$, and the lengthscales follow the HEBO prior, floored at $0.02$. Every function is standardised over a 1024-point Sobol grid, and observations carry Gaussian noise with $\sigma\sim\mathrm{LogU}(10^{-3},0.1)$.

\subsection{Candidate Sampling}
\label{app:candset}

Candidates are drawn by Algorithm~\ref{alg:candset}, identically in pre-training and at deployment. They consist of $40\%$ scrambled Sobol points, $30\%$ uniform points and $30\%$ Gaussian perturbations of the three best observations, clipped to $[0,1]^d$ by the projection $\Pi$. The space-filling points support exploration, and the perturbations allow the network to refine the incumbent. Their scale is drawn log-uniformly over an order of magnitude, which covers both coarse and fine moves without a fixed trust-region radius. Pre-training draws $M\in\{64,128,\dots,384\}$, while all experiments use $M=512$. This is possible because each score is independent of the other candidates (Appendix~\ref{app:features}).

\begin{algorithm}[h]
\caption{$\textsc{CandidateSet}(\mathcal D,M)$}
\label{alg:candset}
\begin{algorithmic}[1]
\State $S\gets 0.4M$ scrambled-Sobol points in $[0,1]^d$
\State $U\gets 0.3M$ uniform points in $[0,1]^d$
\State $A\gets$ the inputs of the three observations in $\mathcal D$ with the largest $y$
\For{$t=1,\dots,0.3M$}
  \State Draw $a\sim\mathcal U(A)$, $s\sim\mathrm{LogU}(0.02,0.2)$ and $\xi\sim\mathcal N(0,I_d)$, and set $l_t\gets\Pi(a+s\,\xi)$
\EndFor
\State \Return $S\cup U\cup\{l_t\}_{t\le0.3M}$
\end{algorithmic}
\end{algorithm}

\subsection{Training Configurations and Cost}
\label{app:training}
All \ours{} models share the settings of Table~\ref{tab:recipe}, and all models were trained on a single NVIDIA H100 GPU. PFNs4BO, \fibo{} and NAP use their own recipes (Appendix~\ref{app:baselines}). Table~\ref{tab:checkpoints} reports the training time as the median time per epoch, excluding the first epoch, multiplied by the number of epochs, without evaluation and data generation. For NAP, which is trained by PPO rather than in epochs, we report the time of the full training run, and its 3D and 4D models use different configurations (Appendix~\ref{app:nap}), which explains the difference in their training times.

\begin{table}[h]
\centering\footnotesize
\setlength{\abovecaptionskip}{2pt}
\setlength{\belowcaptionskip}{3pt}
\setlength{\tabcolsep}{4pt}
\renewcommand{\arraystretch}{0.9}
\begin{minipage}[t]{0.47\textwidth}
\vspace{0pt}
\centering
\caption{Pre-training settings.}
\label{tab:recipe}
\begin{tabular}{@{}ll@{}}
\toprule
setting & value \\
\midrule
optimiser & AdamW \\
learning rate & $10^{-4}$, cosine, 5 warm-up epochs \\
weight decay & $10^{-2}$ \\
gradient clipping & $1.0$ \\
epochs & 100 \\
steps per epoch & 1024 \\
batch size & 128 (2 micro-batches of 64) \\
optimiser steps & $51{,}200$ \\
\bottomrule
\end{tabular}
\end{minipage}\hspace{0.015\textwidth}
\begin{minipage}[t]{0.44\textwidth}
\vspace{0pt}
\centering
\caption{Training time (H100 hours), $q=2$.}
\label{tab:checkpoints}
\begin{tabular}{@{}llcc@{}}
\toprule
model & prior & $d=3$ & $d=4$ \\
\midrule
\ours{}    & HEBO    & 2.73 & 2.74 \\
           & mixture & 4.37 & 4.28 \\
\ourspfn{} & HEBO    & 4.63 & 4.59 \\
           & mixture & 6.06 & 6.10 \\
\oursgp{}  & HEBO    & 2.65 & 2.66 \\
           & mixture & 2.65 & 2.67 \\
\midrule
PFNs4BO    & mixture  & 0.74 & 0.74 \\
\fibo{}    & GP prior & 1.78 & 1.94 \\
           & mixture  & 1.94 & 2.00 \\
NAP        & GP prior & 6.77 & 11.1 \\
\bottomrule
\end{tabular}
\end{minipage}
\end{table}

\section{Baselines}
\label{app:baselines}

\subsection{GP-MAP and GP-FB}
\label{app:gpsbaselines}

Both GP baselines are implemented in BoTorch~0.18.1 \citep{balandat2020botorch}, in double precision. They use a Mat\'ern-5/2 ARD kernel and a homoscedastic Gaussian likelihood, with inputs in the unit cube and standardised outputs. At every round, the next batch is chosen by maximising \texttt{qLogNoisyExpectedImprovement} \citep{ament2023logei}, estimated with 256 quasi-Monte Carlo samples, jointly over the $q$ points with \texttt{optimize\_acqf}, using 512 raw samples and 10 L-BFGS-B restarts. The two baselines differ only in how they handle the hyperparameters. \gpmap{} uses \texttt{SingleTaskGP} with BoTorch's dimension-scaled log-normal hyperpriors and fits the hyperparameters by MAP at every round. \gpfb{} uses \texttt{FullyBayesianSingleTaskGP}, samples the hyperparameter posterior with NUTS (256 warm-up steps, 8 retained samples), and averages the acquisition function over these samples.

\subsection{PFNs4BO}
\label{app:pfns4bo}

We use the two checkpoints released by \citet{muller2023pfns4bo}.\footnote{\url{https://github.com/automl/PFNs4BO}} \pfnhebo{} (12 layers) is pre-trained on the HEBO prior (Appendix~\ref{app:prior}), and \pfnbnn{} (6 layers) on a prior of random Bayesian neural networks with warped inputs. At every round, the observed values are Yeo--Johnson transformed, and expected improvement is computed in closed form from the predictive distribution and maximised with the authors' optimiser, which refines the best 100 of $100{,}000$ uniform samples by L-BFGS-B. The released code proposes one point per round and does not support batch optimisation, so we build batches with the kriging-believer rule \citep{ginsbourger2010kriging}. Each selected point is added to the observations with its predictive mean as value, and the next point maximises expected improvement on the augmented observations.

\subsection{FIBO}
\label{app:fibo}

\fibo{} \citep{fibo2025} pre-trains a conditional normalising flow $p_\vartheta(x^*\mid\mathcal D)$ of the optimiser location on pairs $(\mathcal D,x^*)$ from a GP prior, and proposes a batch as $q$ independent samples. No implementation is public, so we reimplemented it following the specifications mentioned in the paper. One model is trained per dimension. The encoder is the PFNs4BO transformer pre-trained on the BNN prior, whose outputs are averaged over the data points and projected to a context vector of dimension 256 ($d=3$) or 512 ($d=4$). The generative head is a normalising flow, which maps a simple base distribution through a sequence of invertible transformations to a density over the domain, conditioned on the context vector. Following the paper, it is an autoregressive neural spline flow \citep{durkan2019nsf}, in which each block transforms one coordinate at a time with a monotone rational-quadratic spline whose parameters depend on the previous coordinates and on the context vector. It has 6 ($d=3$) or 8 ($d=4$) blocks, each with 6 layers of 256 hidden units. Functions are random-Fourier-feature paths of a squared-exponential GP with the paper's hyperpriors, the optimum is located by multi-start L-BFGS-B, and rejection sampling makes the marginal of $x^*$ uniform. Training uses $70{,}000$ pairs, Adam with learning rate $10^{-4}$, a cosine schedule, 200 epochs and batch size 128. In our runs, rejection sampling discards about $99\%$ of the draws, with $5.6$M optimisations at $d=3$ and $18.2$M at $d=4$.

Three choices depart from the paper, each intended to favour \fibo{}. The base distribution of the flow is uniform on the domain, so that samples are always feasible, the observed values are standardised over the context instead of Yeo--Johnson transformed, and the context sizes cover those of our protocols. On functions drawn from its prior at $d=3$, our implementation reaches a GAP of $0.90$ after 58 evaluations, against $0.93$ after 200 in the paper, and it beats random search on every test function.

\subsection{NAP}
\label{app:nap}
NAP \citep{maraval2023nap} is a transformer neural process that reads the observations and a candidate set, scores every candidate and queries the one with the highest score. We use the authors' code,\footnote{\url{https://github.com/huawei-noah/HEBO/tree/master/NAP}} with 6 layers of width 512 and a bar-distribution head with 1000 buckets, and, since it provides no training script for continuous functions, we add one that changes only the task environment. NAP is meta-trained with PPO for 2000 iterations on the GP task family of MetaBO \citep{volpp2020metabo}, that is, on Mat\'ern-5/2 GP draws with lengthscales in $[0.05,0.5]$, unit signal variance and noise variance $0.1$, with a regret-based reward and an auxiliary loss that fits the predictive head to the observed values. In 4D, we use the published configuration, with episodes of 30 steps without initial points and 2000 Sobol candidates per query. In 3D, we adapt it to our protocol, with episodes of 90 steps from 10 initial points, observed values standardised within the context and the candidate sets of the other candidate-based methods (Appendix~\ref{app:candset}). NAP skips candidates that coincide with an observed or already selected point, and for $q=2$ it imputes the value of its first pick with the mean of its own predictive head, adds it to the context and then selects the second point.

\section{Benchmark Problems}
\label{app:benchmarks}

\subsection{Problems}
\label{app:problem-sets}

Table~\ref{tab:problems} lists the ten 3D and five 4D problems. Every problem is mapped to the unit cube and maximised, with minimisation problems negated.

\paragraph{Analytic functions}
The analytic functions use the implementations of BoTorch~0.18.1 \citep{balandat2020botorch},\footnote{\url{https://github.com/pytorch/botorch}, module \texttt{botorch.test\_functions}.} whose definitions follow the Virtual Library of Simulation Experiments \citep{surjanovic2013virtual}. They cover needle-in-a-haystack landscapes (Ackley), dense ripples on a global trend (Levy, Rastrigin), narrow ridges on a plateau (Michalewicz), an ill-conditioned valley (Rosenbrock) and smooth surfaces with few optima (Hartmann, Six-Hump Camel). Rosenbrock uses the box $[-2.048,2.048]^d$ instead of BoTorch's default $[-5,10]^d$. The two-dimensional Six-Hump Camel function is embedded in 3D through a third coordinate that does not affect $f$.

\paragraph{Bioprocess surface}
This surface, constructed for this work, models the tuning of a microbial fuel cell over three concentrations with Haldane substrate-inhibition kinetics \citep{haldane1930enzymes},
\[
f(x)=h(x_1;p_1)\,h\big(x_2;p_2(x_3)\big)\,h(x_3;p_3),\qquad p_2(x_3)=p_2\,\big(1+0.8\,(x_3-0.5)\big),
\]
where $h(\cdot;p)$ is the Haldane rate $v(s)=s/(K+s+s^2/K_i)$ normalised to one at its peak $p$, with $(p_1,p_2,p_3)=(0.10,0.20,0.35)$ and $\sqrt{K/K_i}=6$. The dependence of $p_2$ on $x_3$ makes it non-separable. It is unimodal but strongly non-stationary, and about $0.3\%$ of the domain lies within $10\%$ of the optimum $f^*=1$ at $x^*=(0.10,0.176,0.35)$.

\paragraph{Olympus campaigns}
The real-world campaigns come from Olympus \citep{hase2021olympus},\footnote{\url{https://github.com/aspuru-guzik-group/olympus}, package \texttt{olymp} on PyPI.} whose neural-network emulators, trained on data from automated experiments, serve as oracles. We use the deterministic \texttt{NeuralNet} emulators, so the optimum can be pinned and regret computed exactly. Since the package requires TensorFlow~1.15, we extracted the emulator weights from \texttt{olymp}~0.0.1b0 and reimplemented the forward pass in PyTorch, which reproduces the stored test $R^2$ of the 3D emulators to seven significant figures. \emph{Fullerenes} maximises the mole fraction of a bis-adduct in a flow reactor, on data from \citet{walker2017tuning}, and \emph{colours} (\texttt{colors\_n9}) minimises the distance of a mixed colour to a target green (Table~\ref{tab:olympus}). \emph{Benzylation} minimises the impurity of an N-benzylation reaction in 4D. Its response surface is smooth, which makes it well suited to GP modelling.

\paragraph{GP-prior draws}
The 4D GP-prior problem consists of ten functions drawn from our reimplementation of the GP prior of \fibo{}, a random-Fourier-feature squared-exponential GP (Appendix~\ref{app:fibo}). Each is optimised as a separate problem, and their results are averaged into one. They are problems on which GP surrogates are well specified by construction.

\begin{table}[h]
\vspace{-6pt}
\centering\footnotesize
\setlength{\tabcolsep}{3pt}
\setlength{\aboverulesep}{0pt}
\setlength{\belowrulesep}{0pt}
\renewcommand{\arraystretch}{1.15}
\caption{Benchmark problems. Minimisation problems are negated.}
\label{tab:problems}
\begin{tabular}{@{}lcll@{\hspace{0.6em}}|@{\hspace{0.6em}}lcll@{}}
\toprule
problem & $d$ & native domain & source & problem & $d$ & native domain & source \\
\midrule
Ackley & 3, 4 & $[-32.768,32.768]^d$ & BoTorch & Six-Hump Camel & 3 & $[-3,3]\times[-2,2]$ & BoTorch \\
Levy & 3, 4 & $[-10,10]^d$ & BoTorch & bioprocess & 3 & $[0,1]^3$ & this work \\
Rosenbrock & 3, 4 & $[-2.048,2.048]^d$ & BoTorch & fullerenes & 3 & emulator & Olympus \\
Rastrigin & 3 & $[-5.12,5.12]^3$ & BoTorch & colours & 3 & emulator & Olympus \\
Michalewicz & 3 & $[0,\pi]^3$ & BoTorch & benzylation & 4 & emulator & Olympus \\
Hartmann & 3 & $[0,1]^3$ & BoTorch & GP-prior draws & 4 & $[0,1]^4$ & \fibo{} prior \\
\bottomrule
\end{tabular}
\vspace{-6pt}
\end{table}

\subsection{Optima}
\label{app:pinned}

Regret is computed against recomputed global optima, since rounded tabulated constants, such as that of Hartmann in BoTorch, which lies $2\times10^{-6}$ below the true maximum, would put a floor under the regret. Closed-form optima are used exactly, and all others were found by multi-start L-BFGS-B and confirmed by an independent second search (Tables~\ref{tab:synthetic} and~\ref{tab:olympus}, on the unit cube). In 4D, Ackley, Levy and Rosenbrock have their optima at the same coordinates as in 3D, and benzylation at the corner $x^*=(1,0,0,0)$, with $f^*=-2.33504673$. The GP-prior draws are filtered as in the pre-training of \fibo{}, so that their optima lie inside the cube and are spread uniformly, use seeds outside those of \fibo{}'s training data, and have their optima checked against $2\times10^{6}$ Sobol points.

\begin{table}[h]
\vspace{-6pt}
\centering\footnotesize
\setlength{\tabcolsep}{2.5pt}
\setlength{\belowcaptionskip}{2pt}
\begin{minipage}[t]{0.53\textwidth}
\centering
\caption{Analytic functions in 3D, with $\cdot$ any value of the inert coordinate.}
\label{tab:synthetic}
\begin{tabular}{@{}lll@{}}
\toprule
problem & $x^*$ & $f^*$ \\
\midrule
Ackley & $(0.5,0.5,0.5)$ & $0$ \\
Levy & $(0.55,0.55,0.55)$ & $0$ \\
Rastrigin & $(0.5,0.5,0.5)$ & $0$ \\
Rosenbrock & $(0.7441,0.7441,0.7441)$ & $0$ \\
Michalewicz & $(0.7012,0.5,0.4090)$ & $2.76039468$ \\
Hartmann & $(0.1146,0.5556,0.8525)$ & $3.86278$ \\
Six-Hump Camel & $(0.515,0.322,\cdot)$ & $1.03162845$ \\
               & $(0.485,0.678,\cdot)$ & \\
\bottomrule
\end{tabular}
\end{minipage}\hfill
\begin{minipage}[t]{0.415\textwidth}
\centering
\caption{Olympus campaigns in 3D, with $n$ measurements and test $R^2$ per emulator.}
\label{tab:olympus}
\begin{tabular}{@{}lll@{}}
\toprule
 & fullerenes & colours \\
\midrule
$n$ & 246 & 102 \\
$R^2$ & 0.990 & 0.940 \\
$x^*$ & $(0.2693,0.4361,0.0513)$ & $(0,1,0.8075)$ \\
$f^*$ & $0.95243518$ & $-0.06969733$ \\
\bottomrule
\end{tabular}
\end{minipage}
\vspace{-6pt}
\end{table}

\section{Additional Results and Ablations}
\label{app:results}
This appendix reports more detailed results behind Section~\ref{sec:results}, including per-problem results, pairwise win rates and results at every context size, and adds analyses not shown in the main text. It covers the 3D and 4D problems (Appendix~\ref{app:full3d}), the comparison with expected improvement (Appendix~\ref{app:controlled}), the effect of the pre-training prior (Appendix~\ref{app:prior-results}), batch optimisation (Appendix~\ref{app:bobatch}) and additional ablations (Appendix~\ref{app:ablations}).

\subsection{Additional Results on the 3D and 4D Problems}
\label{app:full3d}
\label{app:perproblem}
\label{app:4d}

This section is an extension of Section~\ref{sec:results-main}. For the 3D problems, Figures~\ref{fig:main-n10} and~\ref{fig:main-n20} repeat Figure~\ref{fig:main} at $n_{\text{init}}=10$ and $20$, and Table~\ref{tab:winrates3d} complements the ranks of Table~\ref{tab:rank} with the paired win rate of every \ours{} variant against every baseline. For the five 4D problems (Appendix~\ref{app:benchmarks}), whose ranks appear in Table~\ref{tab:rank}, Figure~\ref{fig:4d} shows the regret and rank curves. For both dimensions, Figures~\ref{fig:perproblem} and~\ref{fig:perproblem-4d} and Table~\ref{tab:4d-final} give the results per problem.

\paragraph{Smaller context sizes and paired win rates}
At $n_{\text{init}}=10$ and $20$, the ordering of the methods is the same as at $n_{\text{init}}=40$. Pre-trained on the mixture prior, each \ours{} variant beats \gpmap{} and \gpfb{} in about $80\%$ of runs and every in-context baseline in more than $83\%$. Pre-trained on the HEBO prior, \ours{} and \oursgp{} still beat \gpmap{} in $65$ to $75\%$ of runs, whereas \ourspfn{} ties with it at $n_{\text{init}}=10$ and only pulls ahead with more context.

\paragraph{Per-problem results}
In 3D, the gains of \ours{} are largest on Ackley, Camel, Levy and the Olympus colours campaign, where it reaches a lower regret than all baselines early in the run and keeps it until the end (Figure~\ref{fig:perproblem}). The exception is Rosenbrock, a smooth unimodal valley on which a Mat\'ern GP is well specified, and where the GP baselines end with lower regret. In 4D, the \ours{} variants pre-trained on the mixture prior again reach a lower final regret than \gpmap{} on Ackley, Levy and Rosenbrock (Table~\ref{tab:4d-final}, Figure~\ref{fig:perproblem-4d}). The GP-prior draws and benzylation, which were included as problems on which a GP is well specified, favour the GP baselines, for two different reasons. On the GP-prior draws, the objective is a GP sample, so a GP surrogate can approximate it almost exactly by construction. \oursgp{}, which conditions on the same GP, comes closest to \gpmap{} and improves on it in $55$ to $66\%$ of paired runs, whereas the variants without a GP must infer this structure from the observations alone, and perform worse. On benzylation, the response surface is smooth, and gradient-based maximisation of the acquisition function lets \gpmap{} and \gpfb{} place a query exactly at the optimum, whereas the regret of \ours{} and all other in baselines stalls at a small non-zero value once it has found the optimum region. 

\begin{figure}[t!]
\centering
\setlength{\abovecaptionskip}{1pt}
\setlength{\belowcaptionskip}{0pt}
\includegraphics[width=0.98\linewidth]{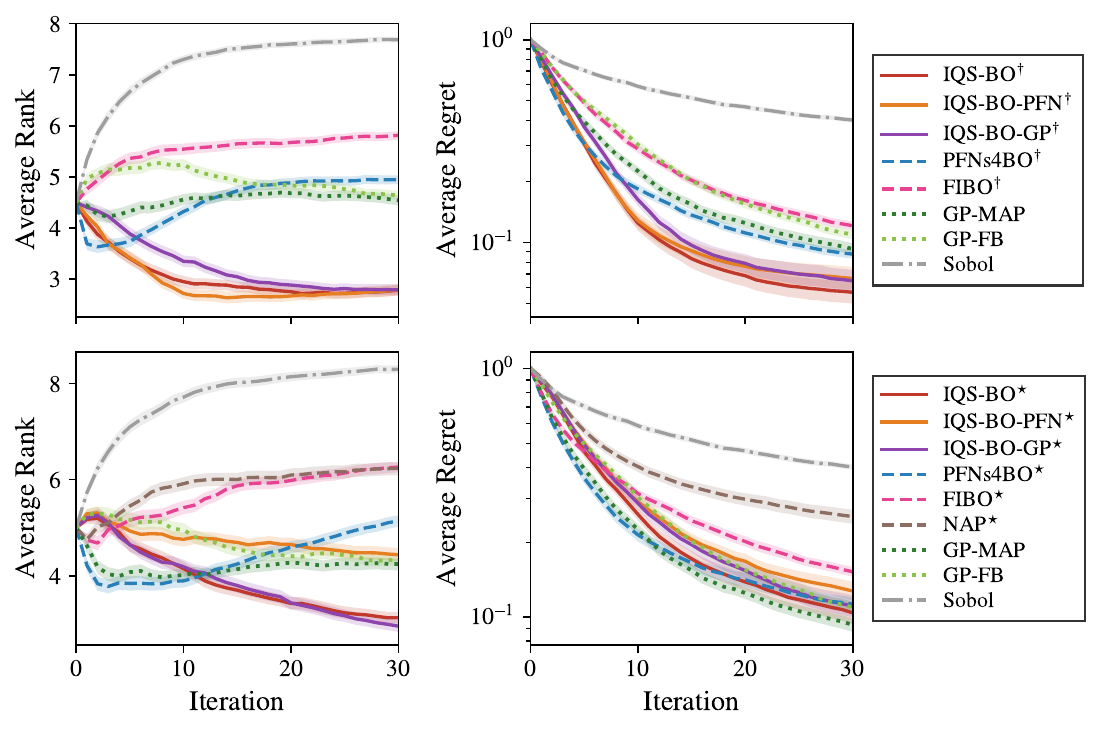}
\caption{As Figure~\ref{fig:main}, at $n_{\text{init}}=10$.}
\label{fig:main-n10}

\vspace{1pt}
\includegraphics[width=0.98\linewidth]{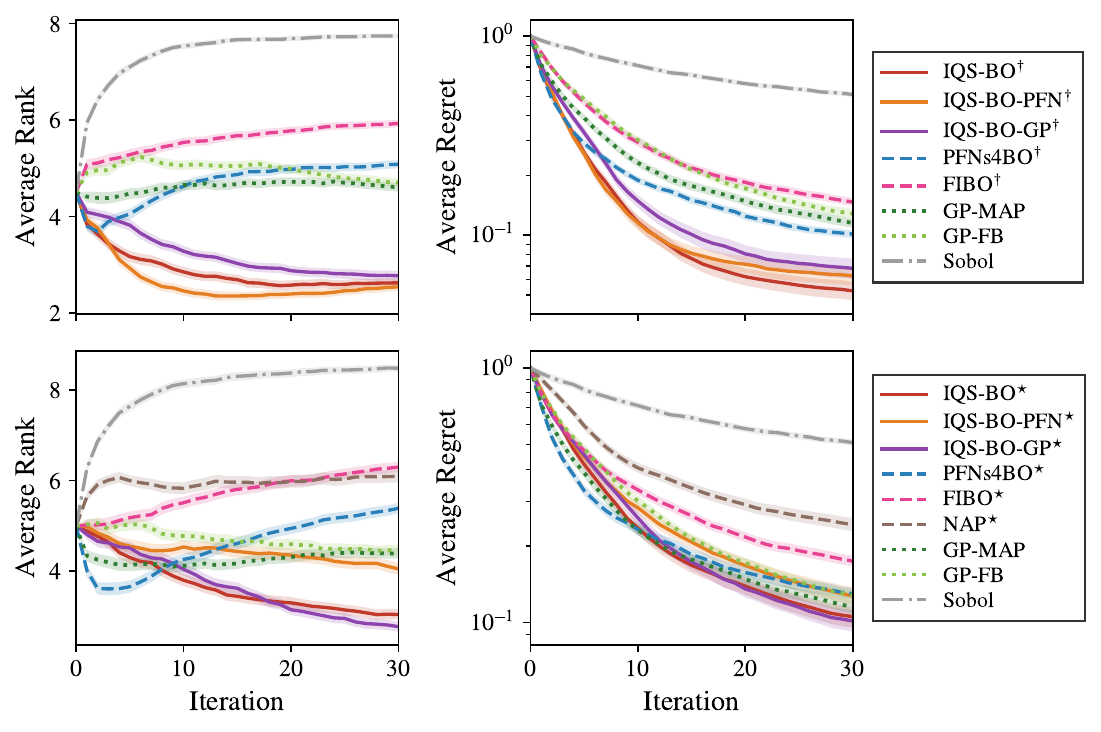}
\caption{As Figure~\ref{fig:main}, at $n_{\text{init}}=20$.}
\label{fig:main-n20}
\end{figure}

\begin{table}[t!]
\centering\footnotesize
\setlength{\tabcolsep}{2pt}
\caption{Paired win rate (\%) on $R^{\text{final}}$ of each \ours{} variant against each baseline, ten 3D problems, 900 paired runs per cell, $\pm$ $95\%$ bootstrap CI half-width. $^{\dagger}$ mixture prior, $^{\star}$ HEBO prior.}
\label{tab:winrates3d}
\vspace{4pt}
\begin{tabular}{@{}cl*{6}{>{\centering\arraybackslash}p{0.118\linewidth}}@{}}
\toprule
 & model & \gpmap & \gpfb & \pfnhebo & \pfnbnn & \fibo & Sobol \\
\midrule
\multirow{6}{*}{\rotatebox[origin=c]{90}{$n_{\text{init}}=10$}} & \ours$^{\dagger}$ & $77.4{\scriptstyle\pm2.4}$ & $77.3{\scriptstyle\pm2.4}$ & $86.4{\scriptstyle\pm2.1}$ & $85.1{\scriptstyle\pm2.2}$ & $92.7{\scriptstyle\pm1.7}$ & $96.4{\scriptstyle\pm1.2}$ \\
 & \ourspfn$^{\dagger}$ & $76.1{\scriptstyle\pm2.6}$ & $77.3{\scriptstyle\pm2.4}$ & $86.3{\scriptstyle\pm2.2}$ & $85.7{\scriptstyle\pm2.2}$ & $90.9{\scriptstyle\pm1.9}$ & $95.4{\scriptstyle\pm1.4}$ \\
 & \oursgp$^{\dagger}$ & $77.6{\scriptstyle\pm2.6}$ & $78.8{\scriptstyle\pm2.4}$ & $85.7{\scriptstyle\pm2.1}$ & $83.9{\scriptstyle\pm2.2}$ & $91.0{\scriptstyle\pm1.8}$ & $96.3{\scriptstyle\pm1.2}$ \\
 & \ours$^{\star}$ & $65.2{\scriptstyle\pm2.7}$ & $67.4{\scriptstyle\pm2.7}$ & $75.3{\scriptstyle\pm2.7}$ & $75.4{\scriptstyle\pm2.8}$ & $83.6{\scriptstyle\pm2.2}$ & $94.2{\scriptstyle\pm1.5}$ \\
 & \ourspfn$^{\star}$ & $49.7{\scriptstyle\pm3.1}$ & $51.0{\scriptstyle\pm3.0}$ & $60.4{\scriptstyle\pm2.9}$ & $61.8{\scriptstyle\pm3.1}$ & $72.0{\scriptstyle\pm2.8}$ & $90.9{\scriptstyle\pm1.8}$ \\
 & \oursgp$^{\star}$ & $71.0{\scriptstyle\pm2.8}$ & $71.6{\scriptstyle\pm2.7}$ & $77.0{\scriptstyle\pm2.6}$ & $76.7{\scriptstyle\pm2.6}$ & $85.3{\scriptstyle\pm2.1}$ & $93.7{\scriptstyle\pm1.6}$ \\
\midrule
\multirow{6}{*}{\rotatebox[origin=c]{90}{$n_{\text{init}}=20$}} & \ours$^{\dagger}$ & $79.7{\scriptstyle\pm2.2}$ & $80.3{\scriptstyle\pm2.2}$ & $90.1{\scriptstyle\pm1.8}$ & $86.4{\scriptstyle\pm2.2}$ & $94.4{\scriptstyle\pm1.4}$ & $97.7{\scriptstyle\pm0.9}$ \\
 & \ourspfn$^{\dagger}$ & $81.6{\scriptstyle\pm2.3}$ & $81.6{\scriptstyle\pm2.2}$ & $90.7{\scriptstyle\pm1.7}$ & $86.1{\scriptstyle\pm2.1}$ & $94.6{\scriptstyle\pm1.4}$ & $97.6{\scriptstyle\pm0.9}$ \\
 & \oursgp$^{\dagger}$ & $78.3{\scriptstyle\pm2.4}$ & $79.9{\scriptstyle\pm2.3}$ & $86.7{\scriptstyle\pm2.1}$ & $83.1{\scriptstyle\pm2.2}$ & $92.4{\scriptstyle\pm1.7}$ & $98.0{\scriptstyle\pm0.9}$ \\
 & \ours$^{\star}$ & $69.6{\scriptstyle\pm2.6}$ & $70.1{\scriptstyle\pm2.6}$ & $79.5{\scriptstyle\pm2.5}$ & $75.6{\scriptstyle\pm2.7}$ & $85.2{\scriptstyle\pm2.2}$ & $95.8{\scriptstyle\pm1.3}$ \\
 & \ourspfn$^{\star}$ & $57.3{\scriptstyle\pm3.2}$ & $57.1{\scriptstyle\pm3.1}$ & $67.2{\scriptstyle\pm2.8}$ & $62.6{\scriptstyle\pm2.9}$ & $76.5{\scriptstyle\pm2.5}$ & $94.3{\scriptstyle\pm1.5}$ \\
 & \oursgp$^{\star}$ & $75.0{\scriptstyle\pm2.8}$ & $76.7{\scriptstyle\pm2.6}$ & $80.7{\scriptstyle\pm2.3}$ & $74.7{\scriptstyle\pm2.6}$ & $87.6{\scriptstyle\pm2.0}$ & $96.1{\scriptstyle\pm1.2}$ \\
\midrule
\multirow{6}{*}{\rotatebox[origin=c]{90}{$n_{\text{init}}=40$}} & \ours$^{\dagger}$ & $79.6{\scriptstyle\pm2.2}$ & $80.6{\scriptstyle\pm2.1}$ & $92.4{\scriptstyle\pm1.6}$ & $91.6{\scriptstyle\pm1.7}$ & $94.2{\scriptstyle\pm1.5}$ & $98.8{\scriptstyle\pm0.7}$ \\
 & \ourspfn$^{\dagger}$ & $81.2{\scriptstyle\pm2.1}$ & $81.7{\scriptstyle\pm2.1}$ & $92.4{\scriptstyle\pm1.6}$ & $90.0{\scriptstyle\pm1.8}$ & $94.7{\scriptstyle\pm1.3}$ & $98.3{\scriptstyle\pm0.8}$ \\
 & \oursgp$^{\dagger}$ & $81.1{\scriptstyle\pm2.3}$ & $82.7{\scriptstyle\pm2.3}$ & $91.6{\scriptstyle\pm1.7}$ & $87.9{\scriptstyle\pm1.8}$ & $94.3{\scriptstyle\pm1.4}$ & $98.3{\scriptstyle\pm0.8}$ \\
 & \ours$^{\star}$ & $68.6{\scriptstyle\pm2.6}$ & $71.9{\scriptstyle\pm2.5}$ & $82.5{\scriptstyle\pm2.4}$ & $80.3{\scriptstyle\pm2.4}$ & $87.8{\scriptstyle\pm2.0}$ & $96.2{\scriptstyle\pm1.2}$ \\
 & \ourspfn$^{\star}$ & $58.6{\scriptstyle\pm2.9}$ & $62.6{\scriptstyle\pm2.9}$ & $75.2{\scriptstyle\pm2.5}$ & $68.2{\scriptstyle\pm2.6}$ & $80.6{\scriptstyle\pm2.3}$ & $95.6{\scriptstyle\pm1.3}$ \\
 & \oursgp$^{\star}$ & $70.1{\scriptstyle\pm2.9}$ & $72.6{\scriptstyle\pm2.8}$ & $79.9{\scriptstyle\pm2.4}$ & $77.5{\scriptstyle\pm2.2}$ & $87.3{\scriptstyle\pm2.0}$ & $96.8{\scriptstyle\pm1.1}$ \\
\bottomrule
\end{tabular}

\end{table}

\begin{figure}[p]
\centering
\includegraphics[width=\linewidth,height=0.95\textheight,keepaspectratio]{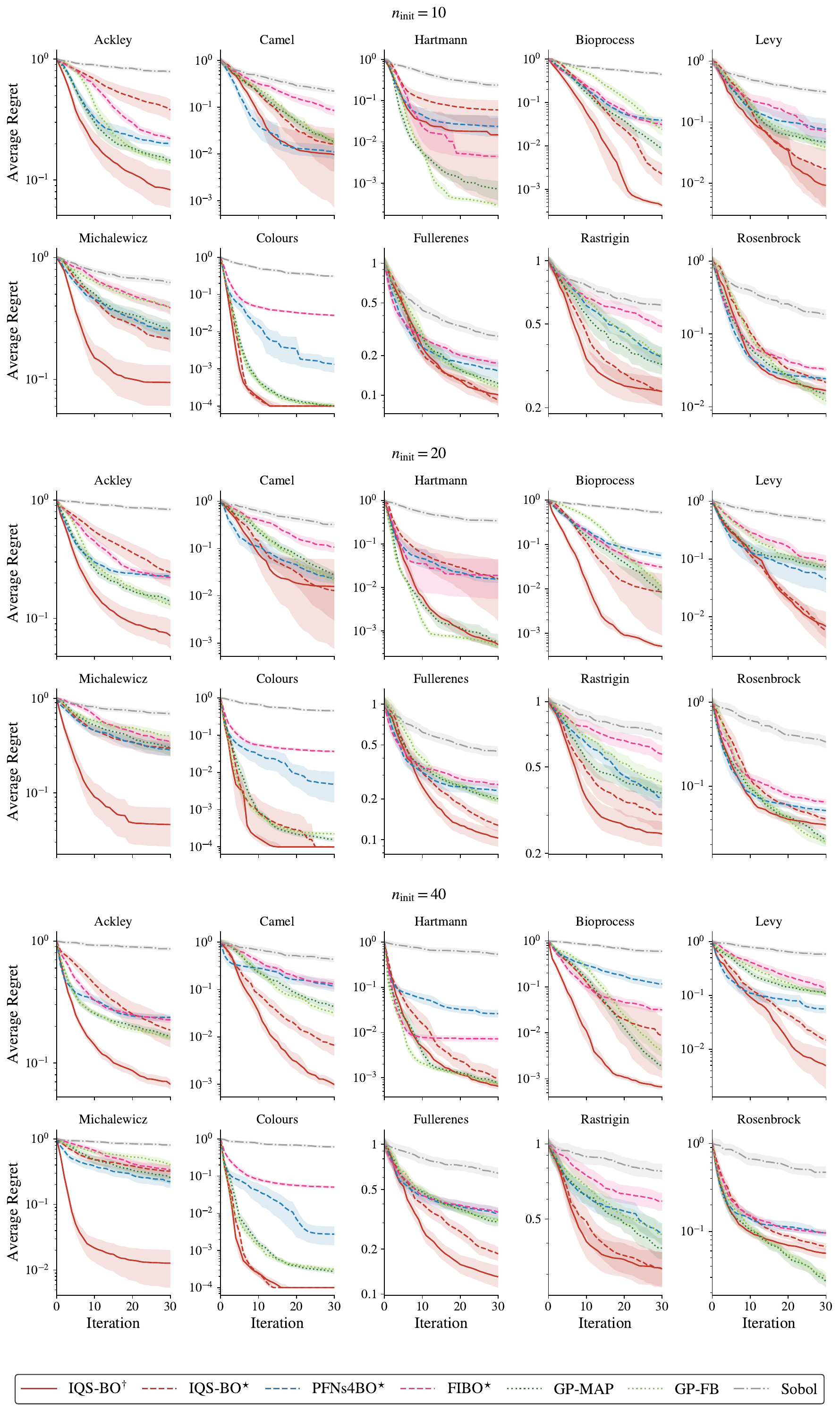}
\caption{Average regret per iteration on each 3D problem, $n_{\text{init}}=10$ (top), $20$ (middle) and $40$ (bottom), 90 seeds, with $95\%$ confidence intervals.}
\label{fig:perproblem}
\end{figure}

\begin{figure}[t!]
\centering
\includegraphics[width=\linewidth]{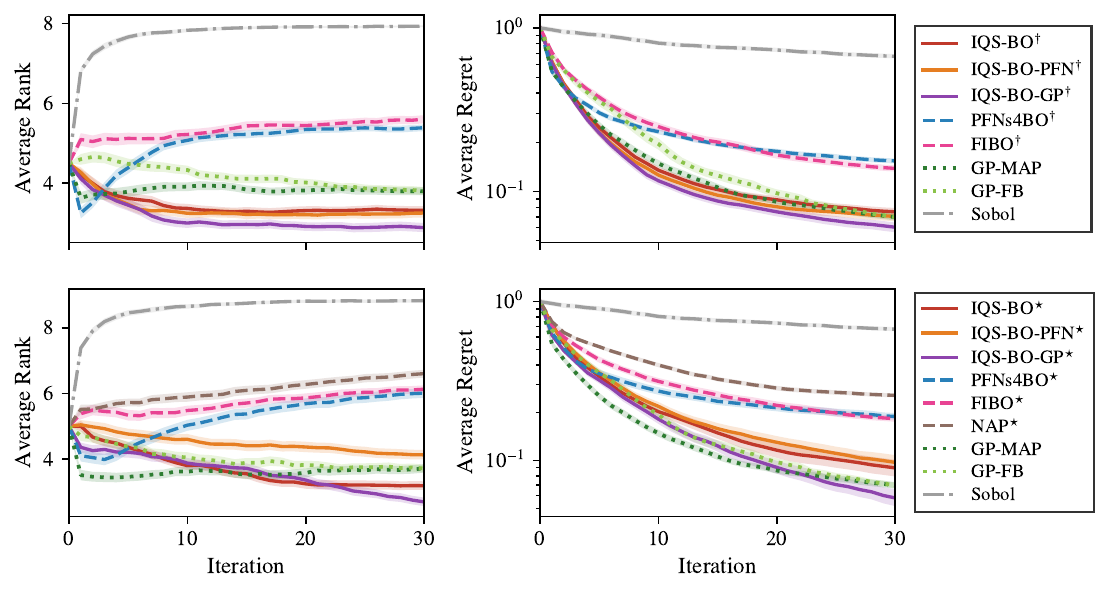}
\caption{As Figure~\ref{fig:main}, on the five 4D problems, $n_{\text{init}}=40$. Curves for $n_{\text{init}}=10$ and $20$ show the same pattern.}
\label{fig:4d}
\end{figure}

\begin{table}[t!]
\centering\small
\setlength{\abovecaptionskip}{2pt}
\setlength{\belowcaptionskip}{4pt}
\caption{Mean normalised final regret per 4D problem, $n_{\text{init}}=40$, 90 seeds, $\pm$ $95\%$ CI. The GP-prior column averages ten instances. Bold marks the best value per problem.}
\label{tab:4d-final}
\begin{tabular}{@{}lccccc@{}}
\toprule
Method & Ackley & Levy & Rosenbrock & Benzylation & GP prior \\
\midrule
\ours{}$^{\dagger}$ & $\mathbf{0.111}{\scriptstyle\pm0.006}$ & $0.016{\scriptstyle\pm0.008}$ & $0.028{\scriptstyle\pm0.002}$ & $0.141{\scriptstyle\pm0.016}$ & $0.079{\scriptstyle\pm0.011}$ \\
\ourspfn{}$^{\dagger}$ & $0.118{\scriptstyle\pm0.006}$ & $0.031{\scriptstyle\pm0.013}$ & $0.026{\scriptstyle\pm0.002}$ & $0.135{\scriptstyle\pm0.016}$ & $0.043{\scriptstyle\pm0.007}$ \\
\oursgp{}$^{\dagger}$ & $0.123{\scriptstyle\pm0.006}$ & $0.024{\scriptstyle\pm0.009}$ & $\mathbf{0.025}{\scriptstyle\pm0.002}$ & $0.113{\scriptstyle\pm0.018}$ & $0.018{\scriptstyle\pm0.004}$ \\
PFNs4BO$^{\dagger}$ & $0.432{\scriptstyle\pm0.020}$ & $0.116{\scriptstyle\pm0.016}$ & $0.080{\scriptstyle\pm0.008}$ & $\mathbf{0.000}{\scriptstyle\pm0.000}$ & $0.142{\scriptstyle\pm0.008}$ \\
\fibo{}$^{\dagger}$ & $0.295{\scriptstyle\pm0.014}$ & $0.138{\scriptstyle\pm0.025}$ & $0.059{\scriptstyle\pm0.006}$ & $0.049{\scriptstyle\pm0.005}$ & $0.152{\scriptstyle\pm0.009}$ \\
\midrule
\ours{}$^{\star}$ & $0.228{\scriptstyle\pm0.048}$ & $\mathbf{0.006}{\scriptstyle\pm0.003}$ & $0.038{\scriptstyle\pm0.003}$ & $0.102{\scriptstyle\pm0.018}$ & $0.076{\scriptstyle\pm0.010}$ \\
\ourspfn{}$^{\star}$ & $0.203{\scriptstyle\pm0.047}$ & $0.053{\scriptstyle\pm0.009}$ & $0.059{\scriptstyle\pm0.005}$ & $0.119{\scriptstyle\pm0.017}$ & $0.053{\scriptstyle\pm0.007}$ \\
\oursgp{}$^{\star}$ & $0.150{\scriptstyle\pm0.029}$ & $0.047{\scriptstyle\pm0.008}$ & $0.033{\scriptstyle\pm0.003}$ & $0.048{\scriptstyle\pm0.016}$ & $0.013{\scriptstyle\pm0.003}$ \\
PFNs4BO$^{\star}$ & $0.358{\scriptstyle\pm0.018}$ & $0.119{\scriptstyle\pm0.035}$ & $0.078{\scriptstyle\pm0.011}$ & $0.179{\scriptstyle\pm0.016}$ & $0.213{\scriptstyle\pm0.014}$ \\
\fibo{}$^{\star}$ & $0.274{\scriptstyle\pm0.010}$ & $0.240{\scriptstyle\pm0.038}$ & $0.056{\scriptstyle\pm0.004}$ & $0.276{\scriptstyle\pm0.006}$ & $0.070{\scriptstyle\pm0.009}$ \\
NAP$^{\star}$ & $0.335{\scriptstyle\pm0.022}$ & $0.065{\scriptstyle\pm0.010}$ & $0.097{\scriptstyle\pm0.012}$ & $0.480{\scriptstyle\pm0.018}$ & $0.307{\scriptstyle\pm0.018}$ \\
\midrule
\gpmap{} & $0.208{\scriptstyle\pm0.010}$ & $0.093{\scriptstyle\pm0.012}$ & $0.042{\scriptstyle\pm0.004}$ & $\mathbf{0.000}{\scriptstyle\pm0.000}$ & $0.006{\scriptstyle\pm0.001}$ \\
\gpfb{} & $0.218{\scriptstyle\pm0.008}$ & $0.087{\scriptstyle\pm0.011}$ & $0.039{\scriptstyle\pm0.004}$ & $\mathbf{0.000}{\scriptstyle\pm0.000}$ & $\mathbf{0.005}{\scriptstyle\pm0.001}$ \\
Sobol & $0.906{\scriptstyle\pm0.028}$ & $0.627{\scriptstyle\pm0.059}$ & $0.516{\scriptstyle\pm0.063}$ & $0.748{\scriptstyle\pm0.038}$ & $0.557{\scriptstyle\pm0.018}$ \\
\bottomrule
\end{tabular}

\end{table}

\begin{figure}[t!]
\centering
\includegraphics[width=\linewidth]{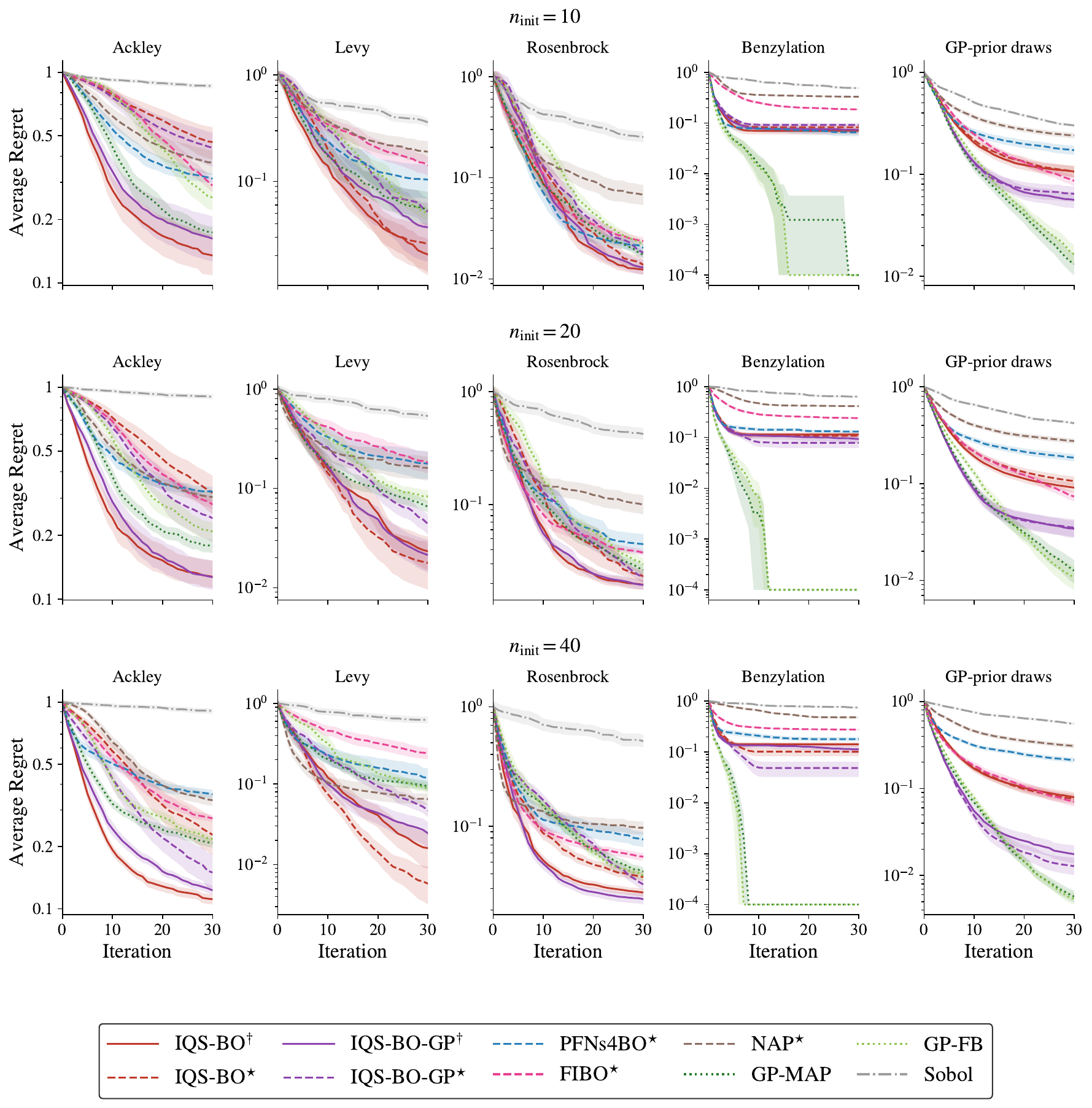}
\caption{Average regret per iteration on each 4D problem (columns), $n_{\text{init}}=10$, $20$ and $40$ (rows), 90 seeds (900 runs for the ten GP-prior draws), with $95\%$ confidence intervals.}
\label{fig:perproblem-4d}
\end{figure}

\subsection{Learned Query Selection Against Expected Improvement}
\label{app:controlled}
\label{app:GPACSBO}
\label{app:SFACSBO}

Table~\ref{tab:controlled} provides the paired win rates behind Section~\ref{sec:results-controlled} and adds two comparisons.

\paragraph{Dependence on the prior}
Pre-trained on the HEBO prior instead of the mixture prior, \oursgp{} beats \ctrl{} in $54$ to $58\%$ of runs on $R^{\text{final}}$ and loses to it on $R^{\text{AUC}}$, winning $30$ to $47\%$ (bottom block of Table~\ref{tab:controlled}). The learned rule still reaches a better final regret than expected improvement but is slower early in the run, and the clear advantage of Section~\ref{sec:results-controlled} comes from the mixture prior.

\paragraph{PFN surrogate}
The same comparison can be read from existing results with a PFN surrogate, since \ourspfn{} reads the released \pfnhebo{} model unchanged and differs from it only in how the query is chosen. \ourspfn{} ends with lower regret than \pfnhebo{} in $60$, $67$ and $75\%$ of runs at $n_{\text{init}}=10$, $20$ and $40$ (Table~\ref{tab:winrates3d}), and its gain is largest on the problems where expected improvement on the PFN posterior does worst against \gpmap{}.

\begin{table}[t!]
\centering\small
\caption{Paired win rate (\%) of the first method against the second, ten 3D problems, 900 paired runs per cell, $\pm$ $95\%$ bootstrap CI half-width. $^{\dagger}$ mixture prior, $^{\star}$ HEBO prior. \ctrl{} requires no pre-training, so its comparison with \gpmap{} is the same under both priors and is shown once.}
\label{tab:controlled}
\vspace{4pt}
\begin{tabular}{@{}lcccccc@{}}
\toprule
 & \multicolumn{3}{c}{$R^{\text{final}}$} & \multicolumn{3}{c}{$R^{\text{AUC}}$} \\
\cmidrule(lr){2-4}\cmidrule(l){5-7}
$n_{\text{init}}$ & 10 & 20 & 40 & 10 & 20 & 40 \\
\midrule
\oursgp{}$^{\dagger}$ vs \ctrl{} & $66.1{\scriptstyle\pm3.1}$ & $68.2{\scriptstyle\pm3.1}$ & $70.3{\scriptstyle\pm2.9}$ & $53.3{\scriptstyle\pm3.2}$ & $61.6{\scriptstyle\pm3.2}$ & $69.6{\scriptstyle\pm3.0}$ \\
\ctrl{} vs \gpmap{} & $70.7{\scriptstyle\pm3.1}$ & $71.2{\scriptstyle\pm2.9}$ & $67.3{\scriptstyle\pm3.1}$ & $61.7{\scriptstyle\pm3.2}$ & $61.9{\scriptstyle\pm3.2}$ & $62.5{\scriptstyle\pm3.1}$ \\
\oursgp{}$^{\dagger}$ vs \gpmap{} & $77.6{\scriptstyle\pm2.7}$ & $78.3{\scriptstyle\pm2.7}$ & $81.1{\scriptstyle\pm2.6}$ & $62.0{\scriptstyle\pm3.2}$ & $70.4{\scriptstyle\pm3.0}$ & $75.1{\scriptstyle\pm2.8}$ \\
\midrule
\oursgp{}$^{\star}$ vs \ctrl{} & $54.3{\scriptstyle\pm3.3}$ & $57.7{\scriptstyle\pm3.3}$ & $56.2{\scriptstyle\pm3.3}$ & $30.1{\scriptstyle\pm3.0}$ & $37.2{\scriptstyle\pm3.2}$ & $46.8{\scriptstyle\pm3.2}$ \\
\oursgp{}$^{\star}$ vs \gpmap{} & $71.0{\scriptstyle\pm2.9}$ & $75.0{\scriptstyle\pm2.8}$ & $70.1{\scriptstyle\pm3.0}$ & $38.8{\scriptstyle\pm3.2}$ & $49.0{\scriptstyle\pm3.3}$ & $58.7{\scriptstyle\pm3.2}$ \\
\bottomrule
\end{tabular}
\end{table}

\subsection{Effect of the Pre-Training Prior}
\label{app:prior-results}
\label{app:prior-ablation}

Table~\ref{tab:prior-winrates} provides the paired win rates behind the prior comparison of Section~\ref{sec:results-controlled} and adds two analyses.

\paragraph{Effect over the run}
For all three \ours{} variants, the win rates of the mixture prior on $R^{\text{AUC}}$ exceed those on $R^{\text{final}}$ (Table~\ref{tab:prior-winrates}), so the prior helps most early in the run. The regret curves of Figure~\ref{fig:prior-curves} show the same, as the gap between the two priors opens in the first rounds, when the model relies more on what it learned in pre-training than on the observations, and then persists.

\paragraph{Problems with well-specified GP surrogates}
Table~\ref{tab:prior-perproblem} tests whether adding non-GP functions to the prior hurts on problems where a GP is already the right model. For \ours{} and \ourspfn{}, the mixture-prior versions win about half of the runs or more on Rosenbrock and the GP-prior draws, so the broader prior costs them nothing there. On benzylation, both priors often reach the optimum, so ties pull the win rates towards $50\%$ at small context sizes, but at $n_{\text{init}}=40$ the HEBO-prior versions of \ours{} and \oursgp{} win $61\%$ and $68\%$ of paired runs. Elsewhere, the mixture prior lowers performance only for \oursgp{} on the 3D Rosenbrock, PFNs4BO on the 4D Rosenbrock and \fibo{} on the GP-prior draws.
\begin{table}[htbp]
\centering\small
\caption{Paired win rate (\%) of each model pre-trained on the mixture prior against the same model pre-trained on its original prior, ten 3D problems, 900 paired runs per cell, $\pm$ $95\%$ bootstrap CI half-width.}
\label{tab:prior-winrates}
\vspace{4pt}
\begin{tabular}{@{}lcccccc@{}}
\toprule
 & \multicolumn{3}{c}{$R^{\text{final}}$} & \multicolumn{3}{c}{$R^{\text{AUC}}$} \\
\cmidrule(lr){2-4}\cmidrule(l){5-7}
$n_{\text{init}}$ & 10 & 20 & 40 & 10 & 20 & 40 \\
\midrule
\ours{}    & $66.7{\scriptstyle\pm3.1}$ & $68.3{\scriptstyle\pm3.1}$ & $68.8{\scriptstyle\pm3.1}$ & $76.1{\scriptstyle\pm2.8}$ & $75.7{\scriptstyle\pm2.8}$ & $76.7{\scriptstyle\pm2.7}$ \\
\ourspfn{} & $77.5{\scriptstyle\pm2.8}$ & $76.0{\scriptstyle\pm2.8}$ & $76.2{\scriptstyle\pm2.8}$ & $80.7{\scriptstyle\pm2.6}$ & $82.6{\scriptstyle\pm2.5}$ & $83.4{\scriptstyle\pm2.3}$ \\
\oursgp{}  & $62.2{\scriptstyle\pm3.2}$ & $61.8{\scriptstyle\pm3.1}$ & $65.8{\scriptstyle\pm3.1}$ & $70.3{\scriptstyle\pm3.0}$ & $70.9{\scriptstyle\pm2.9}$ & $68.7{\scriptstyle\pm3.1}$ \\
PFNs4BO    & $53.0{\scriptstyle\pm3.3}$ & $55.5{\scriptstyle\pm3.2}$ & $62.9{\scriptstyle\pm3.2}$ & $56.7{\scriptstyle\pm3.2}$ & $54.3{\scriptstyle\pm3.2}$ & $63.0{\scriptstyle\pm3.2}$ \\
\fibo{}    & $51.1{\scriptstyle\pm3.3}$ & $47.9{\scriptstyle\pm3.3}$ & $50.9{\scriptstyle\pm3.3}$ & $50.9{\scriptstyle\pm3.2}$ & $53.2{\scriptstyle\pm3.3}$ & $54.2{\scriptstyle\pm3.2}$ \\
\bottomrule
\end{tabular}
\end{table}

\begin{table}[htbp]
\centering\footnotesize
\setlength{\tabcolsep}{2pt}
\setlength{\abovecaptionskip}{2pt}
\setlength{\belowcaptionskip}{6pt}
\caption{Paired win rate (\%) on $R^{\text{final}}$ of each model pre-trained on the mixture prior against the same model pre-trained on its original prior, on the problems for which GP surrogates are well specified, $\pm$ $95\%$ CI. 90 paired runs per cell (900 for the GP draws).}
\label{tab:prior-perproblem}
\begin{tabular*}{\linewidth}{@{}l@{\extracolsep{\fill}}ccc@{\hspace{8pt}}ccc@{\hspace{8pt}}ccc@{\hspace{8pt}}ccc@{}}
\toprule
 & \multicolumn{3}{c}{Rosenbrock (3D)} & \multicolumn{3}{c}{Rosenbrock (4D)} & \multicolumn{3}{c}{GP draws (4D)} & \multicolumn{3}{c}{Benzylation (4D)} \\
\cmidrule(lr){2-4}\cmidrule(lr){5-7}\cmidrule(lr){8-10}\cmidrule(l){11-13}
$n_{\text{init}}$ & 10 & 20 & 40 & 10 & 20 & 40 & 10 & 20 & 40 & 10 & 20 & 40 \\
\midrule
\ours{} & $62{\scriptstyle\pm10}$ & $58{\scriptstyle\pm10}$ & $64{\scriptstyle\pm10}$ & $58{\scriptstyle\pm10}$ & $67{\scriptstyle\pm9}$ & $78{\scriptstyle\pm8}$ & $55{\scriptstyle\pm3}$ & $53{\scriptstyle\pm3}$ & $46{\scriptstyle\pm3}$ & $56{\scriptstyle\pm6}$ & $48{\scriptstyle\pm6}$ & $39{\scriptstyle\pm7}$ \\
\ourspfn{} & $72{\scriptstyle\pm9}$ & $64{\scriptstyle\pm10}$ & $61{\scriptstyle\pm10}$ & $80{\scriptstyle\pm8}$ & $92{\scriptstyle\pm5}$ & $91{\scriptstyle\pm6}$ & $63{\scriptstyle\pm3}$ & $65{\scriptstyle\pm3}$ & $65{\scriptstyle\pm3}$ & $45{\scriptstyle\pm6}$ & $45{\scriptstyle\pm6}$ & $45{\scriptstyle\pm7}$ \\
\oursgp{} & $44{\scriptstyle\pm10}$ & $26{\scriptstyle\pm9}$ & $51{\scriptstyle\pm10}$ & $69{\scriptstyle\pm9}$ & $66{\scriptstyle\pm10}$ & $61{\scriptstyle\pm10}$ & $46{\scriptstyle\pm3}$ & $44{\scriptstyle\pm3}$ & $48{\scriptstyle\pm3}$ & $58{\scriptstyle\pm6}$ & $45{\scriptstyle\pm7}$ & $32{\scriptstyle\pm6}$ \\
PFNs4BO & $49{\scriptstyle\pm10}$ & $56{\scriptstyle\pm10}$ & $74{\scriptstyle\pm9}$ & $26{\scriptstyle\pm9}$ & $28{\scriptstyle\pm9}$ & $40{\scriptstyle\pm10}$ & $49{\scriptstyle\pm3}$ & $58{\scriptstyle\pm3}$ & $59{\scriptstyle\pm3}$ & $78{\scriptstyle\pm6}$ & $95{\scriptstyle\pm3}$ & $97{\scriptstyle\pm2}$ \\
\fibo{} & $50{\scriptstyle\pm10}$ & $44{\scriptstyle\pm10}$ & $46{\scriptstyle\pm10}$ & $48{\scriptstyle\pm10}$ & $60{\scriptstyle\pm10}$ & $51{\scriptstyle\pm11}$ & $20{\scriptstyle\pm2}$ & $16{\scriptstyle\pm2}$ & $13{\scriptstyle\pm2}$ & $100{\scriptstyle\pm0}$ & $100{\scriptstyle\pm0}$ & $100{\scriptstyle\pm0}$ \\
\bottomrule
\end{tabular*}
\end{table}

\begin{figure}[t!]
\centering
\includegraphics[width=\linewidth]{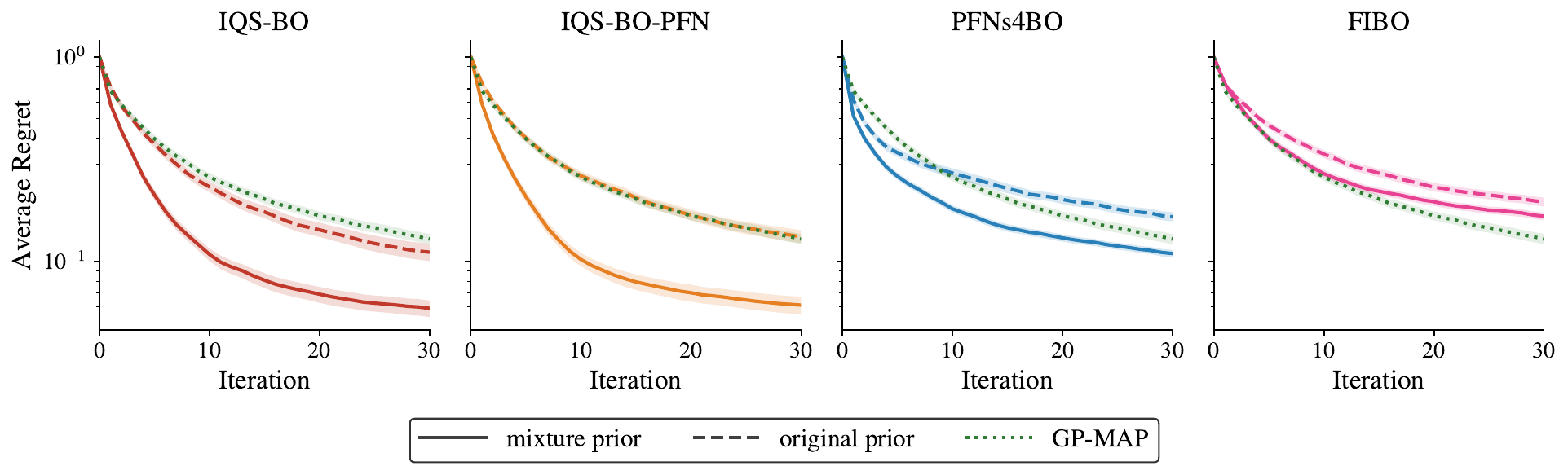}
\caption{Average regret per iteration of \ours{}, \ourspfn{}, PFNs4BO and \fibo{} pre-trained on the mixture prior (solid) and on their original prior (dashed), with \gpmap{} as reference, ten 3D problems, $n_{\text{init}}=40$, 90 seeds, with $95\%$ confidence intervals.}
\label{fig:prior-curves}
\end{figure}

\subsection{Batch Bayesian Optimisation}
\label{app:bobatch}

Table~\ref{tab:batch-perproblem} provides the per-problem GAP values behind Figure~\ref{fig:batch}, for the seven 3D problems of the batch study, all but Camel and the two Olympus campaigns (Appendix~\ref{app:benchmarks}). We provide two complementary analyses.

\paragraph{Per-problem values}
Figure~\ref{fig:abl-batch-perproblem} shows how the GAP of \ours{} and \ourspfn{} evolves over the evaluations on each problem. The mean GAP of \ours{} stays between $0.91$ and $0.93$ across batch sizes, and its margin to the best baseline widens from $0.05$ at $q=10$ to $0.08$ at $q=50$. The batch-size ablation attributes this robustness to pre-training at the deployed batch size (Appendix~\ref{app:ablations}).

\paragraph{Proposal time per batch}
Table~\ref{tab:cost} complements the run-level proposal times of Section~\ref{sec:results-batch} with the time of a single proposal on a GPU workstation, for batch sizes from $1$ to $20$. The cost of every \ours{} variant grows slowly with $q$, as it needs one forward pass per point, whereas that of \gpmap{} and \pfnhebo{} grows steeply. At $q=20$, \ours{} and \ourspfn{} are more than an order of magnitude faster than \gpmap{}, and only \fibo{} is faster, because it draws the points of a batch independently, without conditioning on earlier draws.

\begin{table}[!htb]
\centering\footnotesize
\setlength{\tabcolsep}{1pt}
\renewcommand{\arraystretch}{0.95}
\setlength{\abovecaptionskip}{2pt}
\setlength{\belowcaptionskip}{2pt}
\caption{GAP after 200 evaluations per problem and mean over the seven problems, mean over 90 seeds, with the $95\%$ CI half-width in units of $10^{-3}$ as subscript, and seconds spent proposing queries over a complete run. Bold marks the best value per problem and batch size.}
\label{tab:batch-perproblem}
\begin{tabular}{@{}cl*{8}{>{\centering\arraybackslash}p{0.083\linewidth}}>{\centering\arraybackslash}p{0.055\linewidth}@{}}
\toprule
 & method & Ackley & Hartm. & Bioproc. & Levy & Michal. & Rastr. & Rosenb. & mean & s/run \\
\midrule
\multirow{7}{*}{\rotatebox[origin=c]{90}{$q=10$}} & \ours{} & $.940{\scriptscriptstyle\pm6}$ & $\mathbf{.999{\scriptscriptstyle\pm0}}$ & $\mathbf{.999{\scriptscriptstyle\pm0}}$ & $\mathbf{.998{\scriptscriptstyle\pm1}}$ & $.853{\scriptscriptstyle\pm51}$ & $\mathbf{.750{\scriptscriptstyle\pm39}}$ & $.909{\scriptscriptstyle\pm20}$ & $\mathbf{.921{\scriptscriptstyle\pm10}}$ & $0.9$ \\
 & \ourspfn{} & $\mathbf{.949{\scriptscriptstyle\pm6}}$ & $.998{\scriptscriptstyle\pm1}$ & $.998{\scriptscriptstyle\pm1}$ & $.954{\scriptscriptstyle\pm13}$ & $\mathbf{.874{\scriptscriptstyle\pm47}}$ & $.741{\scriptscriptstyle\pm38}$ & $.933{\scriptscriptstyle\pm30}$ & $\mathbf{.921{\scriptscriptstyle\pm10}}$ & $3.2$ \\
 & \gpmap{} & $.839{\scriptscriptstyle\pm13}$ & $\mathbf{.999{\scriptscriptstyle\pm0}}$ & $.995{\scriptscriptstyle\pm2}$ & $.857{\scriptscriptstyle\pm35}$ & $.760{\scriptscriptstyle\pm57}$ & $.682{\scriptscriptstyle\pm40}$ & $\mathbf{.973{\scriptscriptstyle\pm6}}$ & $.872{\scriptscriptstyle\pm11}$ & $54.5$ \\
 & \gpfb{} & $.834{\scriptscriptstyle\pm14}$ & $\mathbf{.999{\scriptscriptstyle\pm0}}$ & $.994{\scriptscriptstyle\pm2}$ & $.885{\scriptscriptstyle\pm29}$ & $.719{\scriptscriptstyle\pm61}$ & $.679{\scriptscriptstyle\pm39}$ & $.971{\scriptscriptstyle\pm11}$ & $.869{\scriptscriptstyle\pm11}$ & $350.3$ \\
 & \pfnhebo{} & $.773{\scriptscriptstyle\pm20}$ & $.968{\scriptscriptstyle\pm10}$ & $.879{\scriptscriptstyle\pm41}$ & $.946{\scriptscriptstyle\pm16}$ & $.797{\scriptscriptstyle\pm39}$ & $.714{\scriptscriptstyle\pm31}$ & $.855{\scriptscriptstyle\pm37}$ & $.848{\scriptscriptstyle\pm11}$ & $182.6$ \\
 & \pfnbnn{} & $.719{\scriptscriptstyle\pm18}$ & $.975{\scriptscriptstyle\pm11}$ & $.987{\scriptscriptstyle\pm8}$ & $.949{\scriptscriptstyle\pm15}$ & $.547{\scriptscriptstyle\pm59}$ & $.678{\scriptscriptstyle\pm35}$ & $.850{\scriptscriptstyle\pm39}$ & $.815{\scriptscriptstyle\pm12}$ & $131.8$ \\
 & \fibo{} & $.805{\scriptscriptstyle\pm15}$ & $.994{\scriptscriptstyle\pm2}$ & $.975{\scriptscriptstyle\pm14}$ & $.941{\scriptscriptstyle\pm16}$ & $.810{\scriptscriptstyle\pm39}$ & $.568{\scriptscriptstyle\pm43}$ & $.863{\scriptscriptstyle\pm35}$ & $.851{\scriptscriptstyle\pm10}$ & $0.6$ \\
\midrule
\multirow{7}{*}{\rotatebox[origin=c]{90}{$q=20$}} & \ours{} & $.930{\scriptscriptstyle\pm10}$ & $\mathbf{.998{\scriptscriptstyle\pm1}}$ & $\mathbf{.999{\scriptscriptstyle\pm1}}$ & $\mathbf{.995{\scriptscriptstyle\pm3}}$ & $.871{\scriptscriptstyle\pm40}$ & $.781{\scriptscriptstyle\pm37}$ & $.931{\scriptscriptstyle\pm18}$ & $.929{\scriptscriptstyle\pm8}$ & $0.9$ \\
 & \ourspfn{} & $\mathbf{.947{\scriptscriptstyle\pm6}}$ & $\mathbf{.998{\scriptscriptstyle\pm1}}$ & $.998{\scriptscriptstyle\pm1}$ & $.922{\scriptscriptstyle\pm25}$ & $\mathbf{.953{\scriptscriptstyle\pm22}}$ & $\mathbf{.790{\scriptscriptstyle\pm31}}$ & $.937{\scriptscriptstyle\pm25}$ & $\mathbf{.935{\scriptscriptstyle\pm7}}$ & $3.1$ \\
 & \gpmap{} & $.819{\scriptscriptstyle\pm13}$ & $\mathbf{.998{\scriptscriptstyle\pm0}}$ & $.983{\scriptscriptstyle\pm7}$ & $.832{\scriptscriptstyle\pm41}$ & $.680{\scriptscriptstyle\pm55}$ & $.598{\scriptscriptstyle\pm49}$ & $\mathbf{.964{\scriptscriptstyle\pm10}}$ & $.839{\scriptscriptstyle\pm12}$ & $60.8$ \\
 & \gpfb{} & $.816{\scriptscriptstyle\pm12}$ & $\mathbf{.998{\scriptscriptstyle\pm0}}$ & $.976{\scriptscriptstyle\pm9}$ & $.808{\scriptscriptstyle\pm48}$ & $.631{\scriptscriptstyle\pm64}$ & $.617{\scriptscriptstyle\pm43}$ & $.960{\scriptscriptstyle\pm14}$ & $.829{\scriptscriptstyle\pm13}$ & $417.0$ \\
 & \pfnhebo{} & $.756{\scriptscriptstyle\pm15}$ & $.967{\scriptscriptstyle\pm8}$ & $.859{\scriptscriptstyle\pm47}$ & $.943{\scriptscriptstyle\pm15}$ & $.809{\scriptscriptstyle\pm38}$ & $.715{\scriptscriptstyle\pm37}$ & $.852{\scriptscriptstyle\pm33}$ & $.843{\scriptscriptstyle\pm12}$ & $208.5$ \\
 & \pfnbnn{} & $.710{\scriptscriptstyle\pm21}$ & $.974{\scriptscriptstyle\pm9}$ & $.989{\scriptscriptstyle\pm6}$ & $.951{\scriptscriptstyle\pm13}$ & $.589{\scriptscriptstyle\pm60}$ & $.664{\scriptscriptstyle\pm37}$ & $.878{\scriptscriptstyle\pm29}$ & $.822{\scriptscriptstyle\pm12}$ & $135.8$ \\
 & \fibo{} & $.806{\scriptscriptstyle\pm17}$ & $.993{\scriptscriptstyle\pm2}$ & $.965{\scriptscriptstyle\pm18}$ & $.935{\scriptscriptstyle\pm23}$ & $.768{\scriptscriptstyle\pm48}$ & $.557{\scriptscriptstyle\pm46}$ & $.864{\scriptscriptstyle\pm34}$ & $.841{\scriptscriptstyle\pm12}$ & $0.4$ \\
\midrule
\multirow{7}{*}{\rotatebox[origin=c]{90}{$q=50$}} & \ours{} & $.860{\scriptscriptstyle\pm13}$ & $.998{\scriptscriptstyle\pm1}$ & $.989{\scriptscriptstyle\pm5}$ & $\mathbf{.975{\scriptscriptstyle\pm7}}$ & $.853{\scriptscriptstyle\pm46}$ & $\mathbf{.796{\scriptscriptstyle\pm27}}$ & $.892{\scriptscriptstyle\pm34}$ & $.909{\scriptscriptstyle\pm9}$ & $1.2$ \\
 & \ourspfn{} & $\mathbf{.917{\scriptscriptstyle\pm11}}$ & $\mathbf{.999{\scriptscriptstyle\pm0}}$ & $\mathbf{.992{\scriptscriptstyle\pm4}}$ & $.909{\scriptscriptstyle\pm25}$ & $\mathbf{.933{\scriptscriptstyle\pm25}}$ & $.783{\scriptscriptstyle\pm29}$ & $\mathbf{.952{\scriptscriptstyle\pm19}}$ & $\mathbf{.926{\scriptscriptstyle\pm7}}$ & $3.9$ \\
 & \gpmap{} & $.787{\scriptscriptstyle\pm13}$ & $.998{\scriptscriptstyle\pm1}$ & $.945{\scriptscriptstyle\pm17}$ & $.741{\scriptscriptstyle\pm52}$ & $.619{\scriptscriptstyle\pm58}$ & $.550{\scriptscriptstyle\pm50}$ & $.931{\scriptscriptstyle\pm25}$ & $.796{\scriptscriptstyle\pm14}$ & $45.1$ \\
 & \gpfb{} & $.799{\scriptscriptstyle\pm16}$ & $.998{\scriptscriptstyle\pm0}$ & $.954{\scriptscriptstyle\pm11}$ & $.703{\scriptscriptstyle\pm61}$ & $.489{\scriptscriptstyle\pm65}$ & $.599{\scriptscriptstyle\pm52}$ & $.926{\scriptscriptstyle\pm27}$ & $.781{\scriptscriptstyle\pm15}$ & $396.3$ \\
 & \pfnhebo{} & $.692{\scriptscriptstyle\pm21}$ & $.953{\scriptscriptstyle\pm16}$ & $.774{\scriptscriptstyle\pm52}$ & $.908{\scriptscriptstyle\pm28}$ & $.781{\scriptscriptstyle\pm44}$ & $.604{\scriptscriptstyle\pm49}$ & $.780{\scriptscriptstyle\pm51}$ & $.784{\scriptscriptstyle\pm15}$ & $225.4$ \\
 & \pfnbnn{} & $.667{\scriptscriptstyle\pm17}$ & $.971{\scriptscriptstyle\pm9}$ & $.986{\scriptscriptstyle\pm5}$ & $.946{\scriptscriptstyle\pm14}$ & $.657{\scriptscriptstyle\pm50}$ & $.641{\scriptscriptstyle\pm39}$ & $.858{\scriptscriptstyle\pm35}$ & $.818{\scriptscriptstyle\pm11}$ & $98.7$ \\
 & \fibo{} & $.806{\scriptscriptstyle\pm14}$ & $.988{\scriptscriptstyle\pm4}$ & $.955{\scriptscriptstyle\pm26}$ & $.911{\scriptscriptstyle\pm32}$ & $.755{\scriptscriptstyle\pm48}$ & $.538{\scriptscriptstyle\pm48}$ & $.852{\scriptscriptstyle\pm33}$ & $.829{\scriptscriptstyle\pm12}$ & $0.3$ \\
\bottomrule
\end{tabular}

\end{table}

\begin{table}[!htb]
\centering\small
\setlength{\abovecaptionskip}{2pt}
\caption{Median seconds per proposed batch on a GPU workstation (Quadro P2200, Xeon W-1350P), Ackley in 3D, $n=30$ observations, ten repeats (three for \pfnhebo{}), with the interquartile range in brackets. $^{\dagger}$ mixture prior.}
\label{tab:cost}
\begin{tabular}{@{}l@{\hspace{6pt}}c@{\hspace{8pt}}c@{\hspace{8pt}}c@{\hspace{8pt}}c@{}}
\toprule
$q$ & 1 & 2 & 10 & 20 \\
\midrule
\ours{}$^{\dagger}$ & 0.011\,{\footnotesize[0.011, 0.011]} & 0.020\,{\footnotesize[0.020, 0.022]} & 0.085\,{\footnotesize[0.085, 0.085]} & 0.170\,{\footnotesize[0.170, 0.171]} \\
\ourspfn{}$^{\dagger}$ & 0.030\,{\footnotesize[0.030, 0.030]} & 0.060\,{\footnotesize[0.060, 0.060]} & 0.302\,{\footnotesize[0.301, 0.303]} & 0.610\,{\footnotesize[0.604, 0.612]} \\
\oursgp{}$^{\dagger}$ & 0.096\,{\footnotesize[0.096, 0.096]} & 0.105\,{\footnotesize[0.105, 0.106]} & 0.185\,{\footnotesize[0.183, 0.186]} & 0.282\,{\footnotesize[0.280, 0.284]} \\
\fibo{} & 0.021\,{\footnotesize[0.021, 0.021]} & 0.022\,{\footnotesize[0.022, 0.022]} & 0.022\,{\footnotesize[0.022, 0.022]} & 0.022\,{\footnotesize[0.022, 0.023]} \\
\gpmap{} & 0.233\,{\footnotesize[0.233, 0.233]} & 0.232\,{\footnotesize[0.232, 0.233]} & 3.47\,{\footnotesize[3.47, 3.47]} & 10.7\,{\footnotesize[10.7, 10.8]} \\
\pfnhebo{} & 3.18\,{\footnotesize[3.17, 3.19]} & 6.78\,{\footnotesize[6.76, 6.79]} & 32.6\,{\footnotesize[32.6, 32.6]} & 64.9\,{\footnotesize[64.9, 65.0]} \\
\bottomrule
\end{tabular}

\end{table}
\subsection{Additional Ablations}
\label{app:ablations}

This section reports ablation studies not included in Section~\ref{sec:ablation}.

\paragraph{Candidate composition with other scoring rules}
We repeat the candidate-set ablation of Section~\ref{sec:ablation} with \oursgp{} and with \ctrl{}, and compare each against \gpmap{}. With the default candidate set, which contains $30\%$ perturbations of the best observations, \ours{}, \oursgp{} and \ctrl{} beat \gpmap{} in $77$ to $80\%$, $78$ to $81\%$ and $67$ to $71\%$ of paired runs. With a space-filling set of only Sobol and uniform candidates, all three lose to it, winning only $43$ to $45\%$, $36$ to $40\%$ and $27$ to $33\%$ of runs. The perturbed candidates therefore matter regardless of the scoring rule, since they are the main way in which the models can exploit near the best observations.

\paragraph{Batch size and batch construction}
Figure~\ref{fig:abl-batch-perproblem} compares each checkpoint pre-trained at the deployed batch size with the $q=2$ checkpoint used at the same $q$. For \ours{}, the pre-trained checkpoint beats the $q=2$ checkpoint in $62$, $91$ and $98\%$ of paired runs at $q=10$, $20$ and $50$, and the mean GAP of the $q=2$ checkpoint drops from $0.88$ to $0.63$ and $0.44$, while that of the pre-trained checkpoint stays above $0.90$. For \ourspfn{}, the $q=2$ checkpoint degrades much less, with a mean GAP between $0.84$ and $0.90$, likely because the frozen PFN conditions on the pending points and so still separates them. Sequential selection also matters more as the batch grows. At $q=10$, it outperforms top-$q$ selection, which takes the $q$ highest-scoring candidates of a single forward pass without pending points, in $68\%$ of paired runs.

\begin{figure}[t!]
\centering
\includegraphics[width=\linewidth,height=0.93\textheight,keepaspectratio]{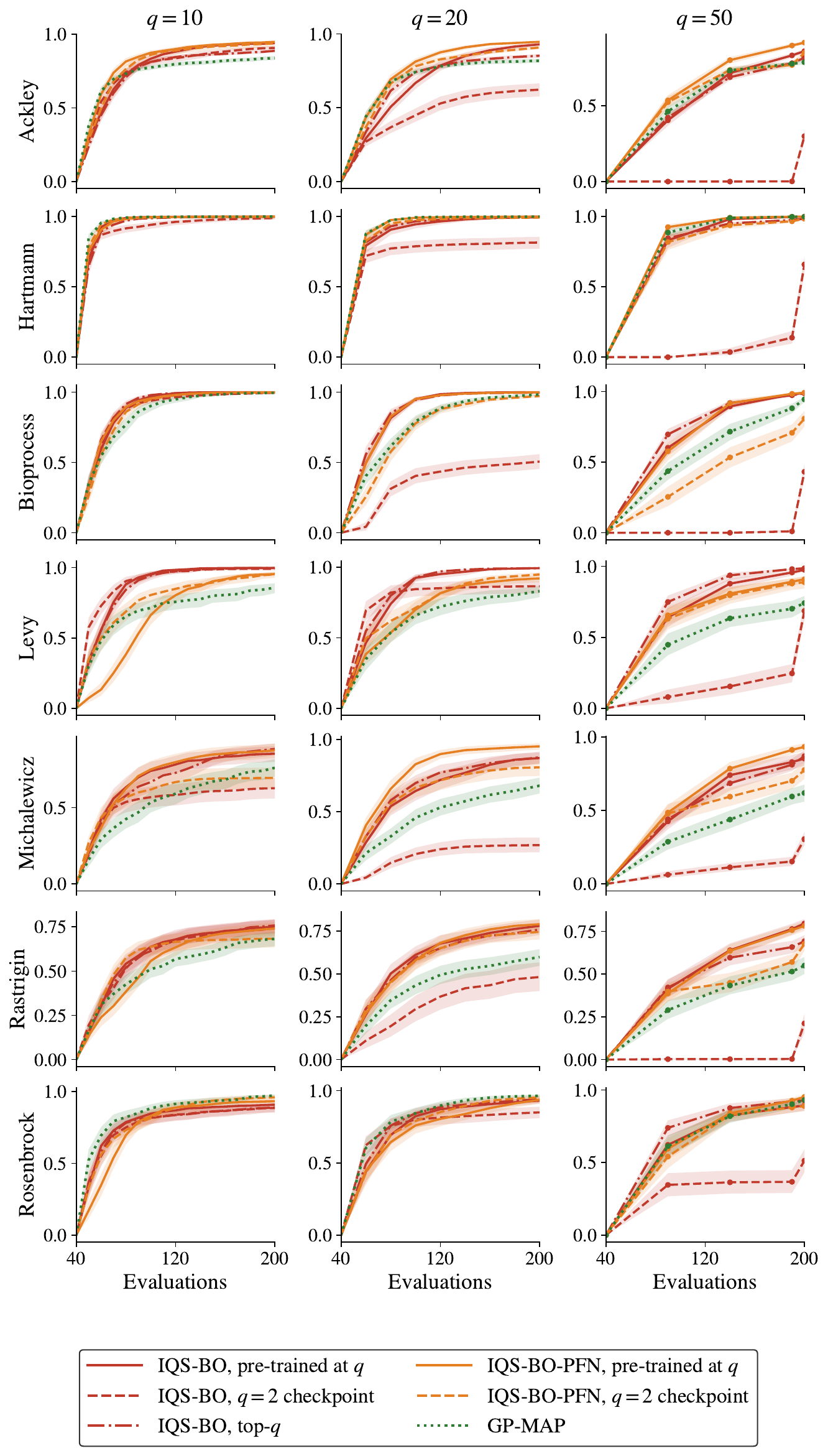}
\caption{GAP per evaluation on each problem of the batch study (rows), for $q=10$, $20$ and $50$ (columns), networks pre-trained on the HEBO prior, $n_{\text{init}}=40$, 90 seeds, with $95\%$ confidence intervals. The y-axis of every panel shows the GAP. Solid lines are checkpoints pre-trained at the deployed batch size, dashed lines the $q=2$ checkpoints used at that batch size.}
\label{fig:abl-batch-perproblem}
\end{figure}

\end{document}